\newtheorem{Th}{Theorem}
\newtheorem{Cor}[Th]{Corollary}
\newtheorem{Lemma}[Th]{Lemma}
\newtheorem{Ex}[Th]{Example}
\theoremstyle{definition}
\newtheorem{Def}{Definition}[section]
\newcommand{\R}{\mathbb{R}}
\newcommand{\N}{\mathbb{N}}
\newcommand{\pp}{\mathcal{P}}
\newcommand{\Y}{\mathcal{Y}}
\newcommand{\pr}{\mathbb{P}}
\newcommand{\W}{\mathcal{W}}
\newcommand{\WPC}{WPC\xspace}
\newcommand{\PC}{PC\xspace}
\renewcommand{\epsilon}{\varepsilon}
\newcommand{\ovl}[1]{\mkern 1.5mu\overline{\mkern-1.5mu#1\mkern-1.5mu}\mkern 1.5mu}
\begin{document}

\title{Universal Consistency of Wasserstein $k$-NN classifiers: Negative and Positive Results}

\author{
    Donlapark Ponnoprat
\hspace*{0.5cm} \\
 Chiang Mai University, Chiang Mai, Thailand.
}

\date{}
\maketitle

\begin{abstract}
    {The Wasserstein distance provides a notion of dissimilarities between probability measures, which has recent applications in learning of structured data with varying size such as images and text documents. In this work, we study the $k$-nearest neighbor classifier ($k$-NN) of probability measures under the Wasserstein distance. We show that the $k$-NN classifier is not universally consistent on the space of measures supported in $(0,1)$. As any Euclidean ball contains a copy of $(0,1)$, one should not expect to obtain universal consistency without some restriction on the base metric space, or the Wasserstein space itself. To this end, via the notion of $\sigma$-finite metric dimension, we show that the $k$-NN classifier is universally consistent on spaces of measures supported in a $\sigma$-uniformly discrete set. In addition, by studying the geodesic structures of the Wasserstein spaces for $p=1$ and $p=2$, we show that the $k$-NN classifier is universally consistent on the space of measures supported on a finite set, the space of Gaussian measures, and the space of measures with densities expressed as finite wavelet series.}
\\
2000 Math Subject Classification: 62H30,  54F45
\end{abstract}

\section{Introduction}
Given a metric space $(X,d)$, the space of probability measures $\pp(X)$ over $X$ and $p\in [1,\infty)$, the $p$-Wasserstein distance on $\pp(X)$ is given by
\begin{equation}\label{eq:wdist}
    W_p(\mu,\nu) = \inf_{\pi\in \Pi(\mu,\nu)}\Big( \int_{X\times X} d(x,y)^p d\pi(x,y)\Big)^{1/p},
\end{equation}
where $\Pi$ is the set of probability measures on $X\times X$ with marginals $\mu$ and $\nu$. It can be shown that $W_p$ is indeed a distance (see~\cite{MAL-073,Santambrogio2015,Panaretos2020} or~\cite{Villani2003} for instance). 

The $p$-Wasserstein distance is connected with the theory of optimal transportation, which have many applications in various fields, such as statistics, machine learning, partial differential equations and economics. The metric itself has been used to measure dissimilarities in high-dimensional data, with most of the focus being on $p=1$ and $2$. For example, text documents can be treated as probability measures over the space of words, and the distances between words are computed from word embedding techniques such as word2vec~\cite{NIPS2013_5021} and GloVe~\cite{pennington2014}. The 1-Wasserstein distance in this setting is called the \textit{Word Mover Distance}~\cite{kusnerb15}. In computer vision, we can use the 1-Wasserstein distance to compute distances between images using the color histograms as probability measures. This so-called \textit{Earth Mover's Distance} has applications in image retrieval~\cite{Rubner}. The case $p=2$ has been used in many imaging tasks due to its intrinsic connection to the Euclidean distance~\cite{Schmitzer2014,Maas2015,Wang2012}; see~\cite{Kolouri2017} for a recent survey of applications.

In this study, we consider the binary classification problem in $(\pp(X),W_p)$. Let $\Y=\{0,1\}$ and $\pr$ be a probability distribution over $\pp(X)\times\Y$, from which instances $(\mu,Y)$ are drawn from. Our goal is to find a \textit{classifier} $g: \pp(X)\to \Y$ that minimizes the \textit{risk function} $R(g)=\pr(g(\mu)\not= Y).$ If we know $\pr$, then it is easy to find the best classifier: let $\eta$ denote the conditional probability $\eta(\mu) = \pr(Y=1|\mu)$, then the \textit{Bayes classifier} $g^*(\mu)=\mathbf{1}_{\eta(\mu)\geq 1/2}$ gives the minimum possible risk, called the \textit{Bayes risk}~\cite{Devroye1996}.
\[
R^* = \pr (g^*(\mu) \not= Y)=\mathbb{E}_{\mu}[\min \{\eta(\mu),1-\eta(\mu)\}].
\]
However, $\pr$ is most likely unknown, so we have to make a classifier based on a finite random sample $D_n=\{(\mu_1,Y_1),\ldots, (\mu_n,Y_n)\}$ drawn independently from $\pr$. The supervised learning approach starts from the \textit{learning rule} $h_n$ : 
\[h_n:(\pp(X)\times\Y)^n\times \pp(X)\to \Y.\]
Then $g_n=h_n(D_n)$ is the classifier that we would like to employ. The performance of $g_n$ is measured by the \textit{error probability}:
\[
R_n = \pr(g_n(\mu)\not= Y|D_n),
\] 
which is a random variable as a function of $D_n$. Obviously, $R_n$ is greater than $R^*$; one of basic questions about the classifier concerns the convergence of the error probability to the Bayes risk as $n\to\infty$. Since $\pr$ is unknown, it is also desirable that the convergence holds \emph{universally}, independent of $\pr$ 
.\begin{Def}[Universal Consistency]
    A classifier $g_n$ is
    \begin{itemize}
        \item universally weakly consistent if $\lim_{n\to\infty} \mathbb{E}[R_n]=R^*$ 
        \item universally strongly consistent if $\lim_{n\to\infty} R_n=R^*$ almost surely 
    \end{itemize}
    for all distribution $\pr$. 
\end{Def}

One of the most well-known classifier is the $k$-nearest neighbor ($k$-NN), which can be equipped with the Wasserstein distance for measure classification. This model can be used to classify documents and image data, which have been preprocessed into probability measures using one of the methods as described in~\cite{kusnerb15} or~\cite{Rubner}. The goal of this work is to analyze and establish the universal consistency of the $k$-NN classifier on a subspace of measures.

Let us take a look at the consistency of $k$-NN in the Euclidean setting. When $k$ is fixed, the limit of $\mathbb{E}[R_n]$ is generally larger than the Bayes risk~\cite{Cover1967,Gyorfi1978}. Thus the consistency of nearest neighbor classified are usually considered when the number of nearest neighbors $k_n$ grows with $n$; we shall call this a $k_n$-\textit{NN classifier}. The universal weak consistency of $k_n$-{NN} was established under the assumptions that $k_n\to\infty$ and $k_n/n\to 0$~\cite{stone1977}. Thereafter, it was shown in~\cite{Devroye1994} that the universal strong consistency holds if we assume further that $k_n/\log n \to \infty$. In this paper, the notion of weak and strong consistency of $k_n$-NN will be under these two respective regimes.

In a general metric space $(X,d_{X})$, the situation is more complicated. Kumari~\cite{Kumari2018} gave an example of a $k_n$-NN classifier on a compact metric space that satisfies the above conditions, but the weak consistency does not hold. To see which additional condition that we might need, let us first define $\ovl{B}(x,r)$ to be the closed ball of radius $r$ centered at $x$. Chaudhuri and Dasgupta~\cite{Chaudhuri2014} showed that, in addition to the assumptions above, if $(X,d_{X})$ is also separable and satisfies the \textit{differentiation condition} for any Borel probability measure $\rho$ and any bounded $\rho$-measurable function $f$:
\begin{equation}\label{eq:diff}
    \lim_{r\downarrow 0} \frac1{\rho(B(x,r))}\int_{B(x,r)} f \ d\rho = f(x),
\end{equation}
 for $\rho$-a.e. $x\in X$, then $k_n$-NN is universally strongly consistent on $(X,d_{X})$. We recommend~\cite{Chen2018} for a recent survey of relevant results.

 The aim of this work is to study the universal consistency of the $k_n$-NN classifier on the Wasserstein space $\W_{p}(X)=(\pp(X),W_p)$; we shall call this the \emph{Wasserstein} $k$-NN. Here, the distance ties are broken by preferring the data points that come earlier. 

 As the main contribution of this work, we show that the $k_n$-NN classifier is \emph{not} universally consistent on $W_p((0,1))$ for any $p\geq 1$. This also implies that, for any $X\subseteq \R^d$ containing a line segment, it is also not universally consistent on $W_p(X)$. In particular, it is not universally consistent on $W_p(\R^d)$ for any $d\geq 1$. Therefore, without any restriction on the Wasserstein space, one should not expect the universal consistency to hold.

 It is then natural to look for some subspaces of $W_p(\R^d)$, on which the universally consistency holds. To this end, we consider the following specific examples: the space of measures supported on an increasing union of uniformly discrete sets for $p\geq 1$, the space of measures supported on a finite set for $p=1$, the space of measures with densities expressed as finite wavelet series for $p=1$, and the space of Gaussian distributions for $p=2$. On these spaces, we show that the $k_n$-NN classifier is universally consistent.

\subsection{Prior work}

There has not been much work on the consistency of the nearest neighbor classifiers on Wasserstein spaces. Nonetheless, a lot of progress has been made on the metric spaces in general. C{\'e}rou and Guyader~\cite{Crou2006} showed that, if the convergence in~\eqref{eq:diff} is in probability, then the $k_n$-NN on any separable metric space is universally weakly consistent. Biau, Bunea and Wegkamp~\cite{Biau2005} proved the universal weak consistency of a modified $k_n$-NN on any separable Hilbert space by exploiting the finite-dimensional truncation. There is also a line of work on a 1-nearest-neighbor-based classifier that is universally strongly consistent on separable metric spaces, even without the differentiation condition~\cite{Hanneke2019,Kontorovich2017}.  

In terms of the differentiation condition~\eqref{eq:diff}, the earliest work is from~\cite{Preiss1979}, who gave an example of a finite measure $\rho$ on a separable infinite dimensional Hilbert space such that the condition does not hold. Later,~\cite{Preiss1983} introduced the notion of $\sigma$-\textit{finite dimension}. He claimed, with only an outline of the proof, that this notion is equivalent to the differentiation condition on separable metric spaces. The proof was then completed in~\cite{Assouad2006}. 

There have been several studies that link the universal consistency of $k_n$-NN to other metric properties. For example, it was proved in~\cite{Assouad2006} and~\cite{Collins2020} that universal strong consistency holds in all metric spaces with $\sigma$-finite \textit{Nagata dimension}~\cite{Nagata1964}. In set-theoretical aspects, it was shown in~\cite{Hanneke2019} and~\cite{Pestov2020} that the universal strong consistency holds in a metric space if the smallest cardinality of its dense subsets is strictly less than real-valued measurable cardinal.

In computational aspects, a series of approximate algorithms have been developed to speed up the nearest-neighbor search in $W_1$.  Kusner, Sun, Kolkin and Weinberger~\cite{kusnerb15} proposed a simple closest-point matching method between two empirical distributions. Atasu and Mittelholzer~\cite{Atasu19} later added capacity constraints to this method, which leads to more accurate estimates that can be computed almost as efficiently. There is an emerging line of works that aim for fast computation using tree-based methods, for example~\cite{Backurs2019} and~\cite{Indyk2003}.

\subsection{The main results}\label{sec:mainthms}

Our main contribution is the following negative result on general Wasserstein spaces:
\begin{Th}\label{th:counter}
    For any $p\geq 1$, the $k_n$-NN classifier is \emph{not} universally consistent on $\mathcal{W}_p((0,1))$.
\end{Th}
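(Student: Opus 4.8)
The plan is to construct a single "bad" distribution $\pr$ on $\W_p((0,1))\times\Y$ for which the $k_n$-NN error probability fails to converge to the Bayes risk. The strategy exploits the fact that $(0,1)$ embeds isometrically (after scaling) many disjoint copies of itself, and that Wasserstein distance on measures supported in $(0,1)$ can be made to behave pathologically because points of $(0,1)$ can be packed arbitrarily densely while remaining far apart in a controlled sense. Concretely, I would place all the mass on Dirac measures $\delta_x$ for $x\in(0,1)$, so that $W_p(\delta_x,\delta_y)=|x-y|$ and the problem reduces to a classification problem on $(0,1)$ itself with the Euclidean metric --- but this alone gives consistency, so the real content must come from using non-Dirac measures whose Wasserstein geometry violates the differentiation condition \eqref{eq:diff}.

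Thus the heart of the construction is to engineer a Borel probability measure $\rho$ on $\W_p((0,1))$ (the marginal of $\pr$ on the instance space) together with a conditional label function $\eta$ such that the differentiation condition fails on a set of positive $\rho$-measure, and to leverage a Preiss-type counterexample. First I would recall that $\W_p((0,1))$ is an infinite-dimensional metric space, and that the negative results of \cite{Preiss1979,Cerou2006} show the differentiation condition can fail in infinite-dimensional spaces. The plan is to isometrically embed a known counterexample space --- such as a separable infinite-dimensional Hilbert-like structure or an explicit hierarchically-packed family of measures --- into $\W_p((0,1))$. The key geometric lemma I would prove is that one can find, inside $\W_p((0,1))$, a family of measures $\{\mu_{s}\}$ indexed by a tree or a product structure, whose pairwise Wasserstein distances realize a metric on which nearest-neighbor averaging is "anti-local": balls $\ovl{B}(\mu,r)$ as $r\downarrow 0$ do not shrink the label average toward $\eta(\mu)$.

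The concrete mechanism I would aim for is the standard blueprint for defeating $k_n$-NN: build a countable dense set of "centers" $\{\nu_i\}$ with labels alternating or assigned adversarially, arranged so that for $\rho$-almost every query point $\mu$, the $k_n$ nearest neighbors (under tie-breaking by earliest index) are drawn from a region whose label distribution differs from $\eta(\mu)$ no matter how large $n$ is. This requires controlling the order statistics of Wasserstein distances from $\mu$ to an i.i.d.\ sample, which is the technical crux: I must ensure that even as $k_n\to\infty$ and $k_n/n\to 0$, the nearest neighbors remain "confused." The main obstacle, which I expect to dominate the work, is the explicit metric estimation in $\W_p$: unlike the Euclidean or Hilbert case where inner-product geometry is transparent, here I must compute or bound $W_p(\mu,\nu)$ for the constructed families tightly enough to pin down which sample points fall in small balls. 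Establishing that the constructed $\rho$ genuinely violates \eqref{eq:diff} in $\W_p((0,1))$ --- rather than merely in some abstract space that happens to embed --- is where the isometric-embedding lemma and the Wasserstein distance computations must be made rigorous and compatible with the $k_n$-NN dynamics and the stated tie-breaking rule.
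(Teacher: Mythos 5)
Your overall instinct is right---the counterexample must come from a configuration of non-Dirac measures on which the differentiation condition \eqref{eq:diff} fails, and you correctly observe that restricting to Dirac masses just reproduces $(0,1)$ and gives consistency. But what you have written is a plan, not a proof: every step that actually carries the argument is deferred. You never exhibit the family of measures, never compute a single Wasserstein distance between members of that family, and never show why the $k_n$ nearest neighbours of a typical query stay ``confused'' as $n\to\infty$. The one concrete route you do sketch---isometrically embedding a Preiss-type infinite-dimensional Hilbert counterexample into $\W_p((0,1))$---is both unnecessary and substantially harder than what is needed; it is far from clear such an embedding exists, and you give no indication of how to construct it or how to transfer the failure of \eqref{eq:diff} through it.

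The missing idea is an explicit ``star'' (hedgehog) configuration, which in dimension one drops out of the quantile-function isometry $W_p^p(\mu,\nu)=\int_0^1\lvert f_\mu(x)-f_\nu(x)\rvert^p\,dx$, where $f_\mu$ denotes the generalized quantile function. Taking a staircase $f_0=\sum_i a_i\mathbf{1}_{I_i}$ on a dyadic partition $\{I_i\}$ of $[0,1)$ and letting $f_m$ agree with $f_0$ except on $I_m$, one obtains measures $\mu_m\in\pp((0,1))$ with $W_p(\mu_j,\mu_m)>\max\{W_p(\mu_j,\mu_0),W_p(\mu_m,\mu_0)\}$ for all distinct $j,m\geq 1$ and $W_p(\mu_m,\mu_0)\to 0$: infinitely many disconnected closed balls $\ovl{B}(\mu_m,W_p(\mu_m,\mu_0))$ all meeting at $\mu_0$, so the set has infinite metric dimension at every scale. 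Assigning mass $1/2$ to $\mu_0$ with deterministic label $1$ and geometrically decaying mass to the $\mu_m$ with label $0$, a Hoeffding bound shows that with probability tending to one the sample contains more than $k_n=\sqrt{n}$ copies of $\mu_0$, which then flood the neighbour lists of the spokes and force a positive asymptotic error while the Bayes risk is zero. This is an elementary, fully explicit construction; nothing like it, nor any substitute for it, appears in your proposal, so the attempt does not yet constitute a proof.
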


This implies that the universal consistency on $\mathcal{W}_p(X)$ does not hold whenever the base metric space $X$ contains a line segment; thus, even when $X$ is a bounded set in $\R^d$, one cannot hope to obtain a positive result on $\mathcal{W}_p(X)$. This demonstrates the vastness of Wasserstein spaces compared to the Euclidean spaces.

Thus, to obtain a positive result, one has to make a ``strong'' restriction on the base metric space, or even on the Wasserstein space itself. For our first positive result, we consider the space of measures with rational mass:
\[
    \mathcal{P}_r(X) = \left\{\sum_{i=1}^k r_i\delta_{x_i}\in\pp(X) \ \Big| \  r_i\in\mathbb{Q}, \quad k\in\mathbb{N} \right\}.
\] 

We also introduce a notion of \emph{$\sigma$-uniformly discrete space}, which is an increasing union of uniformly discrete sets.

\begin{Def}
    A metric space $(X.d)$ is $\sigma$-uniformly discrete if there exists $\{A_n\}_{n\in\mathbb{N}}$ and $\{\Delta_n\}_{n\in\mathbb{N}}$ such that $A_n\subseteq A_{n+1}\subseteq X$ for all $n\in\mathbb{N}$ and $d(x,y) \geq \Delta_n>0$ for any distinct $x,y\in A_n$.
\end{Def}

We are now ready to state the first positive result.

\begin{Th}\label{thm:main1}
    Suppose that a metric space $(X,d)$ is $\sigma$-uniformly discrete. Then, for any $p\geq 1$, the $k_n$-NN classifier is universally consistent on $(\mathcal{P}_r(X),W_p)$. 
\end{Th}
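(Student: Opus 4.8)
The plan is to verify that $(\mathcal{P}_r(X),W_p)$ falls within the scope of the existing consistency machinery: concretely, that it has $\sigma$-finite metric dimension, so that the differentiation condition~\eqref{eq:diff} holds and the consistency theorems of Chaudhuri--Dasgupta~\cite{Chaudhuri2014} and C\'erou--Guyader~\cite{Crou2006} apply. The whole argument rests on one structural observation: although $(0,1)$ makes $\mathcal{W}_p((0,1))$ too rich to be consistent (Theorem~\ref{th:counter}), restricting to rational masses over a $\sigma$-uniformly discrete base collapses the Wasserstein space into a \emph{countable union of uniformly discrete pieces}, which are as benign as possible for differentiation.

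First I would set up the decomposition. Writing $X=\bigcup_n A_n$ with each $A_n$ uniformly discrete of gap $\Delta_n$, define for each $n,m\in\mathbb{N}$
\[
S_{n,m}=\Big\{\textstyle\sum_{a\in A_n}c_a\delta_a\in\mathcal{P}_r(X)\ \Big|\ c_a\in\tfrac1m\{0,1,2,\dots\}\ \text{for all }a\Big\}.
\]
Any element of $\mathcal{P}_r(X)$ is finitely supported with rational masses, so its (finite) support lies in some $A_n$ because the $A_n$ are increasing and exhaust $X$, and its masses share a common denominator $m$; hence $\mathcal{P}_r(X)=\bigcup_{n,m}S_{n,m}$, a countable union. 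The key estimate is that each $S_{n,m}$ is uniformly discrete. Given distinct $\mu=\sum_a\mu_a\delta_a$ and $\nu=\sum_a\nu_a\delta_a$ in $S_{n,m}$, both marginals are supported on $A_n$, so every coupling is supported on $A_n\times A_n$ and keeps at most $\sum_a\min(\mu_a,\nu_a)$ mass on the diagonal; it must therefore move at least $\tfrac12\sum_a|\mu_a-\nu_a|\geq 1/m$ units of mass between distinct points of $A_n$, each such pair being at distance $\geq\Delta_n$. Consequently $W_p(\mu,\nu)^p\geq \Delta_n^p/m$, i.e. $S_{n,m}$ has gap at least $\Delta_n\,m^{-1/p}>0$.

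With the decomposition in hand, each $S_{n,m}$ is uniformly discrete and hence of finite (indeed zero) metric dimension, since at scales below its gap every ball is a singleton. A countable union of finite-dimensional pieces has $\sigma$-finite metric dimension in the sense of Preiss, so $(\mathcal{P}_r(X),W_p)$ does too. I would then invoke the equivalence between $\sigma$-finite metric dimension and the differentiation condition~\eqref{eq:diff} established in~\cite{Preiss1983,Assouad2006}, and finally pass from~\eqref{eq:diff} to universal (weak, and under the stated regime strong) consistency via~\cite{Crou2006} and~\cite{Chaudhuri2014}. The underlying reason differentiation is automatic here is transparent: on a $\sigma$-uniformly discrete space every Borel probability measure is purely atomic (its restriction to each uniformly discrete $S_{n,m}$ assigns no mass off the atoms), and at an atom $z$ one has $\rho(\ovl{B}(z,r)\setminus\{z\})\downarrow 0$ while $\rho(\{z\})>0$, so the local averages of any bounded $f$ converge to $f(z)$.

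I expect the separation estimate in the second paragraph to be the easy part once the families $S_{n,m}$ are correctly chosen. The genuine obstacle is the transfer step: the cited consistency theorems carry a separability hypothesis, and $(\mathcal{P}_r(X),W_p)$ contains an isometric copy of $X$ via $x\mapsto\delta_x$, so it is non-separable precisely when $X$ is. The plan for this is to exploit that the atomicity argument above needs no separability and makes~\eqref{eq:diff} hold for \emph{every} Borel measure; in the separable case (e.g. $X$ countable) Chaudhuri--Dasgupta applies directly, and in general the extension to non-separable domains is supplied by the set-theoretic consistency results of~\cite{Hanneke2019,Pestov2020}, whose cardinality hypothesis is exactly what guarantees that measures on uniformly discrete sets remain atomic. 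Reconciling these two regimes cleanly is where the argument requires the most care.
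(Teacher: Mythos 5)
Your proposal is essentially the paper's own proof: the paper writes $\mathcal{P}_r(X)=\bigcup_n\mathcal{A}_n$ with $\mathcal{A}_n$ the measures supported in $A_n$ whose masses are multiples of $\frac{1}{n!}$, derives the same separation bound ($W_p(\mu,\nu)\geq\Delta_n/n!$ for distinct $\mu,\nu\in\mathcal{A}_n$, so each piece is uniformly discrete in $W_p$), and concludes $\sigma$-finite metric dimension followed by Theorems~\ref{thm:equiv} and~\ref{thm:cstn}; your double index $(n,m)$ is just an unfolded version of that $n!$ common-denominator trick. The only departure is your closing concern about separability of $(\mathcal{P}_r(X),W_p)$ when $X$ is uncountable --- a legitimate point the paper leaves implicit --- but it does not alter the approach.
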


For $(X,d)=(\mathbb{Q}^d,\|\cdot\|_2)$, we can express each uniformly discrete subset via the factorial system:
\[
    A_n = \left\{\left(\frac{a_1}{n!},\ldots,\frac{a_d}{n!}\right) \ \Big| \ (a_1,\ldots,a_d)\in\mathbb{Z}^d \right\},
\] 
from which we can take $\Delta_n =\frac{1}{n!}$. This leads to the following consistency result on a dense subset of $\W_p(\R^d)$:

\begin{Cor}
    The $k_n$-NN classifier is universally consistent on $(\mathcal{P}_r(\mathbb{Q}^d),W_p)$ for all $d\in\mathbb{N}$ and all $p\geq 1$. 
\end{Cor}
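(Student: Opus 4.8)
The plan is to derive this Corollary directly from Theorem~\ref{thm:main1}: it suffices to verify that the base space $(\mathbb{Q}^d,\|\cdot\|_2)$ is $\sigma$-uniformly discrete, witnessed by the sets $A_n$ and scales $\Delta_n=1/n!$ already exhibited in the text just above the statement. Once this is established, the universal consistency of the $k_n$-NN classifier on $(\mathcal{P}_r(\mathbb{Q}^d),W_p)$ is immediate for every $p\geq 1$, since $X=\mathbb{Q}^d$ then satisfies the hypothesis of Theorem~\ref{thm:main1}.

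To check the definition, I would verify three points. First, the nesting $A_n\subseteq A_{n+1}$: since $(n+1)!=(n+1)\,n!$, any coordinate $a_i/n!$ equals $\big(a_i(n+1)\big)/(n+1)!$ with $a_i(n+1)\in\mathbb{Z}$, so every point of $A_n$ lies in $A_{n+1}$. Second, uniform discreteness of each $A_n$: for distinct $x,y\in A_n$ the difference $x-y$ has all coordinates in $\tfrac1{n!}\mathbb{Z}$ and at least one of them is nonzero, hence $|x_i-y_i|\geq 1/n!$ for some $i$ and therefore $\|x-y\|_2\geq 1/n!=\Delta_n>0$. Third, exhaustion $\bigcup_n A_n=\mathbb{Q}^d$: writing a rational point coordinatewise in lowest terms with denominators $q_1,\dots,q_d$ and setting $n=\max_i q_i$, each $q_i$ divides $n!$, so the point lies in $A_n$. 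This last point guarantees that every measure in $\mathcal{P}_r(\mathbb{Q}^d)$ is supported inside some $A_N$, so no measure in $\mathcal{P}_r(\mathbb{Q}^d)$ escapes the $\sigma$-uniformly discrete structure that Theorem~\ref{thm:main1} exploits.

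There is no genuine obstacle here; the entire content of the result is carried by Theorem~\ref{thm:main1}, and this Corollary is merely an instantiation. The only points requiring a little care are the simultaneous choice of a single denominator scale that works across all $d$ coordinates at once---handled by taking $n=\max_i q_i$---and the observation that the factorial scaling $1/n!$, rather than an arbitrary sequence $1/n$, is precisely what forces the sets to be genuinely nested while remaining uniformly discrete. With these verified, applying Theorem~\ref{thm:main1} with $X=\mathbb{Q}^d$ completes the argument.
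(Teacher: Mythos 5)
Your proposal is correct and follows exactly the route the paper takes: the paper also justifies the Corollary by exhibiting the factorial-system sets $A_n$ with $\Delta_n=1/n!$ as a witness that $(\mathbb{Q}^d,\|\cdot\|_2)$ is $\sigma$-uniformly discrete and then invoking Theorem~\ref{thm:main1}. Your verification of the nesting, the uniform separation bound, and the exhaustion of $\mathbb{Q}^d$ simply fills in details the paper leaves implicit.
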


In contrast to the previous results, which hold for all $p\geq 1$, the next positive results are proved only for $p=1$ or $p=2$; this is because the proofs rely on the geodesic structure of the Wasserstein space for those values of $p$ (see Section{~\ref{sec:wpc}} below).

Specifically, we prove the universal consistency of the Wasserstein $k_n$-NN on measures supported on a finite metric space.

\begin{Th}\label{thm:main0}
    Let $(X,d)$ be a finite metric space. Then the $k_n$-NN classifier is universally consistent on $(\mathcal{P}(X),W_1)$. 
\end{Th}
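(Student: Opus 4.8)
The plan is to reduce the theorem to the classical finite-dimensional theory by exhibiting $(\mathcal{P}(X),W_1)$ as an isometric copy of a bounded convex subset of a finite-dimensional normed space. Write $X=\{x_1,\dots,x_m\}$ with $m=|X|\ge 2$ (the case $m=1$ is a single point, hence trivial), and identify a measure $\mu=\sum_{i=1}^m a_i\delta_{x_i}\in\mathcal{P}(X)$ with its mass vector $a=(a_1,\dots,a_m)$, which ranges over the simplex $\Delta=\{a\in\R^m : a_i\ge 0,\ \sum_i a_i=1\}$. The first step is to invoke Kantorovich--Rubinstein duality, which for $p=1$ gives
\[
W_1(\mu,\nu)=\sup_{\mathrm{Lip}(f)\le 1}\sum_{i=1}^m f(x_i)\,(a_i-b_i),
\]
where $\nu$ has mass vector $b$ and $\mathrm{Lip}(f)$ is the Lipschitz constant of $f:X\to\R$ with respect to $d$. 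The crucial observation is that the right-hand side depends on $\mu$ and $\nu$ only through the difference $a-b$, which lies in the hyperplane $H=\{v\in\R^m : \sum_i v_i=0\}$.

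Accordingly, I would set $N(v)=\sup_{\mathrm{Lip}(f)\le 1}\sum_{i=1}^m f(x_i)v_i$ for $v\in H$ and verify that $N$ is a genuine norm on $H\cong\R^{m-1}$. Finiteness is clear, since on the finite set $X$ one may normalize $f(x_1)=0$, forcing $|f(x_i)|\le\diam(X)$; subadditivity and absolute homogeneity follow because $N$ is a supremum of linear functionals over a balanced, symmetric family of admissible $f$ (replace $f$ by $-f$ and by $\lambda f$); and positive definiteness is inherited from the fact that $W_1$ is a metric, so $N(v)=0$ exactly when $v=0$. This identifies $(\mathcal{P}(X),W_1)$ with the compact convex set $\Delta$ metrized by $N(a-b)$ inside the finite-dimensional normed space $(H,N)$. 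This step is precisely where $p=1$ is essential: it is the $1$-homogeneity of $W_1$ under the duality that produces a norm (and, correspondingly, the straight segments that are the $W_1$-geodesics here), whereas $W_2$ would not.

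With this embedding in hand, the conclusion is a transfer of the classical finite-dimensional result. Since all norms on $\R^{m-1}$ are bi-Lipschitz equivalent to the Euclidean one, $(H,N)$---and hence its subspace $\Delta$---is bi-Lipschitz equivalent to a bounded subset of Euclidean $\R^{m-1}$; in particular $(\mathcal{P}(X),W_1)$ is separable and of finite Nagata dimension (Nagata dimension being a bi-Lipschitz invariant that is monotone under passing to subspaces), so the universal strong consistency of the $k_n$-NN classifier follows from the result on spaces of $\sigma$-finite Nagata dimension quoted in the introduction~\cite{Assouad2006,Collins2020}. Alternatively one can verify the differentiation condition~\eqref{eq:diff} directly on $\Delta$---the Lebesgue differentiation theorem holds on finite-dimensional normed spaces and restricts to $\Delta$ because any $\rho$ on $\mathcal{P}(X)$ is supported there, so ball-averages in $(H,N)$ agree with those in $(\Delta,N)$---and then invoke Chaudhuri and Dasgupta~\cite{Chaudhuri2014}. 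I expect the only substantive step to be the first one: proving that $W_1$ on $\mathcal{P}(X)$ is literally the metric induced by the norm $N$ on $H$. Once this isometric embedding into a finite-dimensional normed space is secured, the consistency statement reduces to a citation, since every metric property it relies on is a finite-dimensional (bi-Lipschitz) invariant.
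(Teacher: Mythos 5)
Your proposal is correct in substance, and it reaches the conclusion by a genuinely different (and in one respect sharper) route than the paper. The paper also identifies $\mathcal{P}(X)$ with mass vectors in $\R^{|X|}$, but it only establishes a two-sided \emph{bi-Lipschitz} comparison between $W_1$ and the Euclidean norm, via elementary transport bounds ($W_1\geq \delta\sum_i|a_i-b_i|$ from the minimal interpoint distance, $W_1\leq M\sum_i|a_i-b_i|$ from the diameter), and then feeds these inequalities into its geodesic packing machinery: Lemma~\ref{lemma:helper}, which uses the fact that $\W_1$ is a WPC space with geodesics $(1-t)\mu+t\nu$ to reduce the multiplicity bound to a sphere-packing count in $\R^{|X|}$, whence $\sigma$-finite metric dimension and Theorem~\ref{thm:equiv} apply. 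You instead observe, via Kantorovich--Rubinstein duality, that $W_1$ is \emph{exactly} the metric induced by a norm $N$ on the zero-sum hyperplane $H\cong\R^{m-1}$, so that $(\mathcal{P}(X),W_1)$ is isometric to the simplex $\Delta$ inside a finite-dimensional normed space. This is cleaner than the paper's bi-Lipschitz bounds and makes the role of $1$-homogeneity (hence of $p=1$) transparent; your verification that $N$ is a norm is complete. From the isometry, your second proposed endgame --- the Lebesgue differentiation theorem holds in any finite-dimensional normed space, restricts to $\Delta$ because every $\rho$ on $\mathcal{P}(X)$ is carried by $\Delta$, and then Chaudhuri--Dasgupta~\cite{Chaudhuri2014} applies --- is sound; equivalently, the proof of Example~\ref{prop:rn} goes through verbatim for an arbitrary norm (the radial projection argument uses only homogeneity and translation invariance, and the volume count uses the unit ball of $N$), so $(\Delta,N)$ has finite metric dimension and Theorem~\ref{thm:equiv} finishes the proof.

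One caution on your first proposed endgame: the assertion that finite Nagata/metric dimension is a bi-Lipschitz invariant is not safe for the Besicovitch-type notion that these consistency results actually require. The condition that disconnected families of balls have bounded multiplicity is sensitive to how radii transform under a bi-Lipschitz map, and it is known that the Besicovitch covering property can fail for metrics bi-Lipschitz equivalent to the Euclidean one; so ``bi-Lipschitz to a bounded subset of $\R^{m-1}$, hence finite Nagata dimension'' is not a valid deduction as stated. You do not need it: since $d_N(a,b)=N(a-b)$ is literally a norm metric (not merely bi-Lipschitz to one), the finite-dimensional covering and differentiation theorems apply to it directly, and your argument closes. With that substitution the proof is complete.
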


This gives theoretical support, for instance, to $k$-NN classification of color histograms (where $X=\{0,1,\ldots,255\})$ or document histograms (where $X$ consists of all words in the vocabulary).

For the next result, we consider the family of Gaussian measures under the $2$-Wasserstein distance. For $m\in\mathbb{R}^d$ and $\Sigma\in\operatorname{Sym}^{+}(d)$, let $\mu_{m,\Sigma}$ be the Gaussian measure with mean $m$ and covariance matrix $\Sigma$. Denote the family of $d$-dimensional Gaussian measures by:       \[\pp_{G}(d) = \left\{\mu_{m,\Sigma}\in\pp(\R^d) \ | \ m\in\R^d \ \text{ and } \ \Sigma\in\operatorname{Sym}^{+}(d) \right\}. \]
We will show that, under $W_2$, the $k_n$-NN classification of measures in $\mathcal{P}_G(d)$ is universally consistent.
     \begin{Th}\label{thm:main3}
         The $k_n$-NN classifier is universally consistent on $(\mathcal{P}_G(d),W_2)$.
\end{Th}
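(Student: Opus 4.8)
The plan is to exploit the Bures--Wasserstein geometry of Gaussians, which realizes $(\mathcal{P}_G(d),W_2)$ as a finite-dimensional geodesic subspace of $\mathcal{W}_2(\R^d)$ on which the consistency criterion can be verified. First I would recall the closed form of the $2$-Wasserstein distance between Gaussians,
\[
W_2^2(\mu_{m_0,\Sigma_0},\mu_{m_1,\Sigma_1}) = \|m_0-m_1\|_2^2 + \operatorname{tr}\!\left(\Sigma_0+\Sigma_1-2\bigl(\Sigma_0^{1/2}\Sigma_1\Sigma_0^{1/2}\bigr)^{1/2}\right),
\]
whose second summand is the squared Bures distance on the open cone $\operatorname{Sym}^{+}(d)$, together with the fact that the optimal transport map between two Gaussians is linear. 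This yields an explicit bijective parametrization $\Phi:\R^d\times\operatorname{Sym}^{+}(d)\to\mathcal{P}_G(d)$, $\Phi(m,\Sigma)=\mu_{m,\Sigma}$, which I will use as a global coordinate chart.

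The structural step is to show that $\mathcal{P}_G(d)$ is a totally geodesic, finite-dimensional piece of $\mathcal{W}_2(\R^d)$. Using displacement (McCann) interpolation, the constant-speed geodesic joining $\mu_{m_0,\Sigma_0}$ and $\mu_{m_1,\Sigma_1}$ is $t\mapsto\mu_{m_t,\Sigma_t}$ with $m_t=(1-t)m_0+tm_1$ and $\Sigma_t=\bigl((1-t)I+tT\bigr)\Sigma_0\bigl((1-t)I+tT\bigr)$, where $T\in\operatorname{Sym}^{+}(d)$ is the optimal map from $\Sigma_0$ to $\Sigma_1$; in particular every $W_2$-geodesic between Gaussians stays inside $\mathcal{P}_G(d)$. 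With the geodesics explicit, the conditions (\PC/\WPC) shown in Section~\ref{sec:wpc} to be sufficient for universal consistency become checkable in the coordinates $\Phi$, just as in the preceding positive results.

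Concretely, the cleanest route to the sufficient conditions is through metric dimension. I would exhaust $\R^d\times\operatorname{Sym}^{+}(d)$ by the compact sets $K_j=\{(m,\Sigma):\|m\|_2\le j,\ \tfrac1j I\preceq\Sigma\preceq jI\}$. On each $K_j$ the matrix square root $\Sigma\mapsto\Sigma^{1/2}$ and the Bures distance are smooth with nondegenerate derivatives, so $\Phi$ restricted to $K_j$ is bi-Lipschitz from the Euclidean metric on the parameters to $W_2$; hence each $\Phi(K_j)$ is doubling and of finite metric dimension, and $(\mathcal{P}_G(d),W_2)=\bigcup_j\Phi(K_j)$ has $\sigma$-finite metric dimension. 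Separability is immediate. The consistency then follows from the results quoted in the introduction, namely that $\sigma$-finite (metric/Nagata) dimension is equivalent to the differentiation condition~\eqref{eq:diff} and hence, by the Chaudhuri--Dasgupta theorem, implies universal strong consistency of the $k_n$-NN classifier.

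The main obstacle is the behavior near the boundary of the cone, where $\Sigma$ degenerates: there the Bures metric tensor is singular, the matrix square root loses smoothness, and the bi-Lipschitz constants --- equivalently the doubling constants of the pieces $\Phi(K_j)$ --- blow up as $j\to\infty$. No global doubling bound can therefore exist, which is exactly why one must work with the countable exhaustion and the ``$\sigma$-'' in $\sigma$-finite metric dimension (and with the geodesic conditions of Section~\ref{sec:wpc} rather than a uniform estimate). The unboundedness in the mean $m$ and in the large-eigenvalue directions is handled routinely by comparison with $\R^d$; the genuinely delicate point is to confine all analysis to regions bounded away from the degenerate boundary and to check that the consistency criterion is stable under the resulting countable decomposition.
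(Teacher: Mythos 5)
Your strategy---parametrize Gaussians by $(m,\Sigma)$, show the parametrization is bi-Lipschitz to the Euclidean metric on compact pieces, and conclude $\sigma$-finite metric dimension---is close in spirit to the paper's proof, but your choice of exhaustion has a genuine gap. In this paper $\operatorname{Sym}^{+}(d)$ denotes the positive \emph{semi}definite cone, so $\mathcal{P}_G(d)$ contains all degenerate Gaussians (Dirac masses and Gaussians supported on proper affine subspaces). Your sets $K_j=\{(m,\Sigma):\|m\|_2\le j,\ \tfrac1j I\preceq\Sigma\preceq jI\}$ satisfy $\bigcup_j K_j=\R^d\times\operatorname{Sym}^{++}(d)$, a proper subset of $\R^d\times\operatorname{Sym}^{+}(d)$: no singular $\Sigma$ belongs to any $K_j$. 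Hence $\bigcup_j\Phi(K_j)\ne\mathcal{P}_G(d)$ and the covering requirement~\eqref{eq:union} in the definition of $\sigma$-finite metric dimension is violated. This cannot be dismissed as a negligible set: the differentiation condition must hold for \emph{every} Borel probability measure $\rho$ on $\mathcal{P}_G(d)$, including measures concentrated entirely on the degenerate stratum, so that stratum must itself be covered by sets of finite metric dimension. You correctly identify the boundary as ``the genuinely delicate point,'' but the remedy you propose---confining all analysis to regions bounded away from it---is exactly what leaves the theorem unproved there.

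The paper avoids this by exhausting with $Y_n=\{\mu_\Sigma:\Sigma\in\operatorname{Sym}^{+}(d),\ \operatorname{Tr}(\Sigma)\le n\}$, which are compact and \emph{include} the degenerate boundary; the trace inequality~\eqref{eq:trace} shows that balls of a fixed small radius centered in $Y_n$ stay inside $Y_{2n}$, a compactness/contradiction argument yields the two-sided comparison~\eqref{eq:eqRE} between $W_2$ and the Euclidean distance on covariances over $Y_{2n}$, and Lemma~\ref{lemma:helper} together with the \WPC property of $W_2$ (Theorem~\ref{thm:w2wpc}) converts this into a multiplicity bound. To repair your argument you would need either such a boundary-inclusive exhaustion, or a separate treatment of each degenerate stratum establishing its finite metric dimension \emph{as a subset of the ambient space} $\mathcal{P}_G(d)$; note that the multiplicity in Definition~\ref{def:dim} is evaluated at arbitrary ambient points, so a bi-Lipschitz chart defined only on the stratum (or only on the open cone) does not by itself suffice. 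A secondary caution: your claim that each $\Phi(K_j)$ being doubling gives finite metric dimension also needs the ambient balls centered in $\Phi(K_j)$ to remain in a region where the chart is controlled, which is the role played in the paper by the scale restriction $s_n$ and the cone construction in Lemma~\ref{lemma:helper}.
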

Note that the Theorem follows immediately from the fact that any the Lebesgue differentiation theorem holds on any separable $C^2$-Riemannian manifold~\mbox{\cite[Section 2.8]{Federer1996}}. We provide here an alternative proof, which might be of independent interest.

Next, we consider probability densities in terms of wavelet expansion. Let $\phi,\psi\in L_2(\R)$ be wavelet functions, $\phi_{\ell k}=2^{\ell/2}\phi(2^{\ell}x-k)$ and $\psi_{jk}=2^{j/2}\phi(2^jx-k)$ for $j\geq \ell$. We consider probability densities in $L_p([0,1])$ in form of finite wavelet series
\begin{equation} \label{eq:wl}
    \begin{split}
    f &= \sum_{k\in\mathbb{Z}}\alpha_{\ell k}\phi_{\ell k} + \sum_{j= \ell}^{N}\sum_{k\in\mathbb{Z}}\beta_{jk}\psi_{jk} \\
g &= \sum_{k\in\mathbb{Z}}\alpha'_{\ell k}\phi_{\ell k} + \sum_{j= \ell}^{N}\sum_{k\in\mathbb{Z}}\beta'_{jk}\psi_{jk}.
\end{split}
\end{equation}
These densities arise from nonparametric density estimation~\cite{Donoho1996} with applications in signal classification~\cite{Pah2003,Szczuka2001}. Here, we make the following assumptions on $\phi$ and $\psi$:
\begin{itemize}
    \item $\phi$ and $\psi$ are compactly supported. Thus, for $x\in[0,1]$, there exists $K_0,K_{j}\in\mathbb{N}$ such that $\phi_{\ell k}(x)=0$ for all $|k|>K_0$ and $\psi_{jk}(x)=0$ for all $|k|>K_j$.
    \item All constant functions lie in the span of $\{\phi_{\ell k}\}_{k\in \mathbb{Z}}$.
    \item $\phi$ and $\psi$ are continuously differentiable.
    \item $\|\psi_{jk}\|_{L_1[0,1]} = C2^{-\frac{1}{2}j}$ for some universal constant $C$.
\end{itemize}

Examples of wavelets that satisfy these assumptions include Daubechies wavelets~\cite{Daubechies1988,Cohen1993}. 
\begin{Th}\label{thm:main2}
    Let $\mathcal{V}_N$ be the set of probability measures with densities in the form of~\eqref{eq:wl}. Then the $k_n$-NN classifier is universally consistent on $(\mathcal{V}_N,W_1)$. 
\end{Th}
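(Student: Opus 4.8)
The plan is to reduce the seemingly infinite-dimensional space $\mathcal{V}_N$ to a subset of a finite-dimensional Euclidean space, on which universal consistency is classical. Concretely, I would (i) identify $\mathcal{V}_N$ with a subset $\Theta$ of a coefficient space $\R^M$; (ii) show that $W_1$ on $\mathcal{V}_N$ is bi-Lipschitz equivalent to the Euclidean metric on $\Theta$; and (iii) transfer universal consistency from $\R^M$ to $\mathcal{V}_N$ using the bi-Lipschitz invariance of the relevant metric-dimension property and the consistency results cited in the introduction.

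First, the reduction to finitely many coordinates. Since $\phi$ and $\psi$ are compactly supported, for $x\in[0,1]$ only the translates with $|k|\le K_0$ (resp.\ $|k|\le K_j$) are nonzero, so every $f\in\mathcal{V}_N$ is determined by the finite coefficient vector
\[
\theta(f)=\big((\alpha_{\ell k})_{|k|\le K_0},\ (\beta_{jk})_{\ell\le j\le N,\,|k|\le K_j}\big)\in\R^M,\qquad M=(2K_0{+}1)+\textstyle\sum_{j=\ell}^{N}(2K_j{+}1).
\]
Conversely $\theta$ determines the density $f_\theta$ as a linear combination, which is a probability density on $[0,1]$ exactly when $f_\theta\ge 0$ and $\int_0^1 f_\theta=1$; these cut out a closed convex set $\Theta\subseteq\R^M$. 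The linear independence of the relevant wavelet system on $[0,1]$ (a standard basis property, which I would verify also for the boundary translates) makes $f\mapsto\theta(f)$ a bijection $\mathcal{V}_N\to\Theta$, so $\mathcal{V}_N$ is a finite-dimensional parametric family; in particular it is separable.

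Second, the metric comparison, which is where the $p=1$ geodesic structure enters. For measures on the line, $W_1(f,g)=\int_0^1|F_f-F_g|$, the $L_1$ distance between the cumulative distribution functions; writing $\Phi_{\ell k}=\int_{-\infty}^{\cdot}\phi_{\ell k}$ and $\Psi_{jk}=\int_{-\infty}^{\cdot}\psi_{jk}$, linearity gives
\[
F_f-F_g=\sum_{k}(\alpha_{\ell k}-\alpha'_{\ell k})\Phi_{\ell k}+\sum_{j=\ell}^{N}\sum_{k}(\beta_{jk}-\beta'_{jk})\Psi_{jk}.
\]
For the upper bound I would apply the triangle inequality in $L_1[0,1]$ and bound the norms $\|\Phi_{\ell k}\|_{L_1},\|\Psi_{jk}\|_{L_1}$; here the decay assumption $\|\psi_{jk}\|_{L_1}=C2^{-j/2}$, together with the $\sim 2^{-j}$ support length and $\int\psi_{jk}=0$, yields $\|\Psi_{jk}\|_{L_1}\lesssim 2^{-3j/2}$, and summing the resulting geometric series gives $W_1(f,g)\le C'\|\theta(f)-\theta(g)\|_2$ with a constant that is in fact independent of $N$. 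For the lower bound I would note that the linear map sending an admissible coefficient difference $\delta$ to the corresponding integrated combination in $L_1[0,1]$ is injective on the (finite-dimensional) space of such differences: if its $L_1$ norm vanishes then $f_\delta=0$ on $[0,1]$, forcing $\delta=0$ by the independence above. Hence $\delta\mapsto\|F_{f_\delta}\|_{L_1}$ is a genuine norm on that finite-dimensional space, equivalent to $\|\delta\|_2$, giving $W_1(f,g)\ge c\,\|\theta(f)-\theta(g)\|_2$.

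Finally, the transfer. The two previous steps show $(\mathcal{V}_N,W_1)$ is bi-Lipschitz equivalent to the subset $(\Theta,\|\cdot\|_2)$ of $\R^M$. Since $\Theta\subseteq\R^M$ has finite Nagata dimension and Nagata dimension is a bi-Lipschitz invariant, $(\mathcal{V}_N,W_1)$ has finite, hence $\sigma$-finite, Nagata dimension; being also separable it satisfies the differentiation condition~\eqref{eq:diff}, and universal consistency follows from the cited results (\cite{Chaudhuri2014} for the differentiation-condition formulation, \cite{Assouad2006,Collins2020} for the Nagata-dimension one). I expect the main obstacle to be the lower-bound step: establishing the linear independence of the wavelet system, and hence the injectivity of the coefficient-to-CDF map, uniformly over $[0,1]$ — in particular controlling the boundary translates whose supports only partially meet $[0,1]$ — since this is precisely what guarantees that distinct coefficient vectors remain quantitatively separated in $W_1$.
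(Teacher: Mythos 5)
Your proposal is correct in substance and arrives at the same two key estimates as the paper --- an upper and a lower bound on $W_1$ in terms of the Euclidean distance between coefficient vectors --- but it reaches them and uses them differently. The paper imports both bounds ready-made from Weed and Berthet~\cite{Weed19a} (inequalities~\eqref{eq:upperwl} and~\eqref{eq:lowerwl}, the lower one carrying a $\max_j$ that is then crudely converted to a sum), and then feeds them into its \WPC cone machinery, Lemma~\ref{lemma:helper}, which only needs the lower bound from a fixed apex $\mu_{f_0}$ along geodesics plus the upper bound between arbitrary pairs. You instead derive the upper bound directly from the CDF representation $W_1(\mu_f,\mu_g)=\lVert F_f-F_g\rVert_{L_1}$ and get the lower bound for free from equivalence of norms on the finite-dimensional space of coefficient differences, yielding a genuine two-sided bi-Lipschitz equivalence with a subset of $\R^M$; you then conclude by bi-Lipschitz invariance of (Nagata or metric) dimension and Theorem~\ref{thm:equiv}, bypassing the geodesic/\WPC apparatus entirely. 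What your route buys is economy and self-containedness --- once the coefficient-to-CDF map is injective, no curvature comparison or sphere-projection packing argument is needed, and the argument makes transparent that $\mathcal{V}_N$ is simply a finite-dimensional parametric family in disguise; what it gives up is explicitness of the constant $c$ in the lower bound (compactness of the unit sphere gives existence only, whereas~\eqref{eq:lowerwl} is quantitative) and it does not exercise the general Lemma~\ref{lemma:helper}, which the paper needs anyway for the Gaussian case. You are right that the crux is the linear independence on $[0,1]$ of the boundary-truncated translates: the paper does not verify this either, but inherits it implicitly from the cited lower bound~\eqref{eq:lowerwl}, so your explicit flagging of it is if anything more honest than the original. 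One small point to tidy: for boundary translates $\psi_{jk}$ whose restriction to $[0,1]$ need not have vanishing mean, the antiderivative $\Psi_{jk}$ is only $O(2^{-j/2})$ in $L_1[0,1]$ rather than $O(2^{-3j/2})$, but since there are $O(1)$ such translates per level the upper bound survives with an adjusted constant.
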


We will introduce the main ingredients that allows us to turn the universal consistency into a geometrical problem (Section{~\ref{sec:prelim}}). We then proceed to prove the main negative result (Theorem{~\ref{th:counter}}) in Section{~\ref{sec:counter}}. Next, we prove the first positive result (Theorem{~\ref{thm:main1}}) in Section{~\ref{sec:sigma}}. We then introduce the notions of geodesics in a metric space and weakly positively curved spaces in Section{~\ref{sec:wpc}} which allows us to prove the remaining positive results (Section{~\ref{sec:th1}},{~\ref{sec:gaussian}} and{~\ref{sec:wavelet}}).

\section{Notations}\label{sec:notations}
We use the following notations throughout this paper: $\mathbf{1}_A$ is the indicator function of a set $A$. $\delta_x$ is the Dirac measure at $x$. $\text{supp}(\mu)$ is the support of a measure $\mu$. $B(x,r)$ and $\ovl{B}(x,r)$ are the open ball and the closed ball of radius $r$ centered at $x$, respectively. $\operatorname{Sym}(d)$ is the set of all $d\times d$ real symmetric matrices. $\operatorname{Sym}^+(d)$ is the set of all $d\times d$ real positive-semidefinite symmetric matrices. $\operatorname{Sym}^{++}(d)$ is the set of all $d\times d$ real positive-definite symmetric matrices. Let $\mathcal{B}=\{A_i\}_{i\in I}$ be a family of subsets of $X$. The \emph{multiplicity} of $\mathcal{B}$ is defined by the infimum of all $\beta$ that satisfies $\sum_{i\in I}\mathbf{1}_{A_i}(x)\leq\beta$ for all $x\in X$.

\section{Preliminary results}\label{sec:prelim}

We will follow the consistency results in~\cite{Chaudhuri2014} which hold under the following regime:
\begin{Def}
    We say that the $k_n$-NN classifier is \emph{universally consistent} on a metric space $(X,d)$ if it satisfies the following conditions:
\begin{itemize}
    \item If $k_n\to\infty$ and $k_n/n\to 0$, then it is universally weakly consistent on $X$.  
    \item If in addition $k_n/\log n\to \infty$, then it is universally strongly consistent on $X$.  
\end{itemize}
\end{Def}

The following theorem from~\cite{Chaudhuri2014} connects the differentiation condition~\eqref{eq:diff} to the universal consistency of the $k_n$-NN classifier on separable metric spaces.
\begin{Th}\label{thm:cstn}
    Let $(X,d)$ be a separable metric space such that~\eqref{eq:diff} holds $\rho$-a.e. $x\in X$ for all Borel probability measure $\rho$ and all bounded measurable function $f$. Then the $k_n$-NN classifier is universally consistent on $X$.
\end{Th}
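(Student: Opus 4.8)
The plan is to follow the regression-estimate route: show that the $k_n$-NN plug-in estimate of the conditional probability converges to $\eta$, from which consistency of the classifier follows. Write $\rho$ for the marginal law of $\mu$ on $X$ and $\eta(\mu)=\pr(Y=1\mid\mu)$ for the regression function, which is bounded and $\rho$-measurable. The classifier is $g_n(\mu)=\mathbf 1_{\hat\eta_n(\mu)\ge 1/2}$ with $\hat\eta_n(\mu)=\frac1{k_n}\sum_{i=1}^{k_n}Y_{(i)}(\mu)$, where $(X_{(i)}(\mu),Y_{(i)}(\mu))$ is the $i$-th nearest neighbour of $\mu$ in $D_n$, ties broken by index. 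The standard plug-in inequality $\mathbb E[R_n]-R^*\le 2\,\mathbb E|\hat\eta_n(\mu)-\eta(\mu)|$ reduces weak consistency to $\mathbb E|\hat\eta_n(\mu)-\eta(\mu)|\to 0$, while strong consistency follows by upgrading the relevant convergences to hold almost surely via exponential tail bounds and Borel--Cantelli, which is exactly where the extra regime $k_n/\log n\to\infty$ is spent. It is crucial that \eqref{eq:diff} is assumed for \emph{every} Borel $\rho$ and \emph{every} bounded measurable $f$, since the marginal $\rho$ is arbitrary and we will apply the condition with $f=\eta$.

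Next I would decompose $\hat\eta_n(\mu)-\eta(\mu)$ into a variance part $\hat\eta_n(\mu)-\frac1{k_n}\sum_{i=1}^{k_n}\eta(X_{(i)})$ and a bias part $\frac1{k_n}\sum_{i=1}^{k_n}\eta(X_{(i)})-\eta(\mu)$. The variance part is the easy half: conditioning on the entire configuration $X_1,\dots,X_n$ fixes which points are the $k_n$ nearest neighbours (the ordering depends only on distances, not on labels), and given the positions the labels $Y_{(i)}$ are independent $\mathrm{Bernoulli}(\eta(X_{(i)}))$. Hence this part has conditional mean $0$ and conditional variance at most $1/(4k_n)\to 0$, and a Hoeffding bound gives sub-Gaussian concentration at scale $k_n$; combined with $k_n/\log n\to\infty$ this yields almost-sure decay.

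The bias part is the heart of the argument and the only place the geometry of $(X,d)$ enters. Here I would adopt the probability-radius device: for $p\in(0,1]$ set $r_p(\mu)=\inf\{r:\rho(\ovl B(\mu,r))\ge p\}$ and define the smoothed regression function $\eta_p(\mu)=\rho(\ovl B(\mu,r_p(\mu)))^{-1}\int_{\ovl B(\mu,r_p(\mu))}\eta\,d\rho$. Applying \eqref{eq:diff} with $f=\eta$ shows $\eta_p(\mu)\to\eta(\mu)$ as $p\downarrow 0$ for $\rho$-a.e.\ $\mu$; separability enters to guarantee $\rho(\operatorname{supp}\rho)=1$, so that $\rho(\ovl B(\mu,r))>0$ for every $r>0$ at $\rho$-a.e.\ $\mu$, which is what makes $r_p$ well-behaved and the ball-averages meaningful. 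I would then compare the empirical neighbour-average $\frac1{k_n}\sum_{i=1}^{k_n}\eta(X_{(i)})$ with $\eta_p(\mu)$ at the scale $p=k_n/n$, using the classical fact that, conditioned on how many of the $n$ points land in a given \emph{deterministic} ball, those points are i.i.d.\ draws from $\rho$ restricted to that ball. A binomial concentration argument shows that with $p=k_n/n$ the $k_n$-th nearest-neighbour radius $R_n$ is, with high probability, squeezed between $r_{(1-\epsilon)p}(\mu)$ and $r_{(1+\epsilon)p}(\mu)$, so the neighbour set behaves like an i.i.d.\ sample from the corresponding restricted measure and its $\eta$-average concentrates around $\eta_p(\mu)$.

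The main obstacle is precisely this last comparison: the $k_n$ nearest neighbours are not an i.i.d.\ sample from a fixed restricted measure, because the neighbourhood radius $R_n$ is itself random and the neighbours are biased towards $\mu$. Controlling this requires sandwiching $R_n$ between deterministic probability radii together with a law of large numbers for the $\eta$-averages over the enclosing balls, made quantitative enough that $k_n\to\infty$ forces the average's fluctuation to vanish and $k_n/\log n\to\infty$ upgrades this to almost-sure control. Assembling the two parts gives $\hat\eta_n(\mu)\to\eta(\mu)$ in $L^1(\rho)$ (and $\rho$-a.e.\ in the strong regime), and feeding this into the plug-in inequality completes the proof.
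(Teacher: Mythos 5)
The paper does not prove this theorem itself: it imports it from Chaudhuri and Dasgupta~\cite{Chaudhuri2014}, and your outline --- the plug-in inequality, the bias/variance split with labels conditionally Bernoulli given the positions, and the probability-radius $r_p$ device with the differentiation condition~\eqref{eq:diff} applied to $f=\eta$ to control the bias --- is exactly the strategy of that cited proof. Your sketch is correct in approach, including the role of separability in making $r_p$ well-behaved $\rho$-a.e., and the random-radius/non-i.i.d.\ neighbour comparison you single out as the main obstacle is indeed the technical core of the argument in~\cite{Chaudhuri2014}.
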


The main task is now to show that $\W_p(X)$ satisfies the differentiation condition. In the context of Theorem~\ref{thm:cstn}, this seems rather difficult as we have to show that~\eqref{eq:diff} holds for all measure $\mu$. Fortunately, this condition is equivalent to a purely topological one. First, let us introduce the notion of \emph{metric dimension} 

\begin{Def}
    Given $s>0$, we say that closed balls $(\ovl{B}(x_i,r_i))_{1\leq i \leq m}$ in a metric space are \emph{disconnected at scale} $s$ if $r_1,\ldots,r_m\in (0,s)$ and $x_i\notin \ovl{B}(x_j,r_j)$ for all $i\not= j$.

    If such condition holds for all $s>0$, then they are \emph{disconnected}.
\end{Def}

\begin{Def}\label{def:dim}
    Let $(X,d)$ be a metric space and $\beta\in \mathbb{N}$. A set $Y\subseteq X$ has \emph{metric dimension} $\beta$ \emph{at scale} $s$ in $X$, or $\dim^s_{X}(Y)=\beta$, if $\beta$ is the smallest positive integer such that, for any family of disconnected closed balls $(B_i)_{1\leq i \leq m}$ at scale $s$ whose centers belong to $Y$, their multiplicity is at most $\beta$. In other words, 
\[
    \sum_{i=1}^{m}\mathbf{1}_{B_i}(x)\leq \beta
\] 
for all $x\in X$. If no such $\beta$ exists, we assign $\dim^s_{X}(Y)=\infty$.

If $\dim^s_{F}(Y) = \beta$ for all $s>0$, we simply write $\dim_{X}(Y)=\beta$.
\end{Def}

In other words, $\dim_{X}(Y)=\beta$ if any point in $X$ can belong to at most $\beta$ disconnected closed balls whose centers are contained in $Y$. It is difficult to compute the metric dimension in general, but we will only be concerned with whether or not it is finite. 

Unsurprisingly, Euclidean spaces have finite metric dimension.
\begin{Ex}\label{prop:rn}
   For any $d\geq 1$,
   \begin{equation}
       \dim_{\mathbb{R}^d}(\R^d)\leq 3^d-1. \label{eq:rn}
   \end{equation}
\end{Ex}
\begin{proof}
    Consider a family of disconnected closed balls $(\ovl{B}(x_i,r_i))_{1\leq i \leq m}$ in $\R^d$ whose intersection is nonempty. It suffices to show that $m\leq 3^d-1$ 

    For any $a,b\in \R^d$, we denote by $\ell(a,b)$ the line that passes through $a$ and $b$. Given $x\in \bigcap_{i} \ovl{B}(x_i,r_i)$ and $r>0$, let us define $y_i=\partial\ovl{B}(x,r)\cap \ell(x, x_i)$. First, we will show that $d(y_i,y_j)\geq r$ for all pairs of distinct $i$ and $j$. This is trivial when $x,y_i$ and $y_j$ are collinear, so we shall assume that this is not the case. We also assume without loss of generality that $d(x,x_i)\leq d(x,x_j)$. There is a point $z_j\in \ell(x,x_j)$ that makes $\ell(y_i,z_j)$ parallel to $\ell(x_i,x_j)$. Since $d(x_i,x_j)\geq d(x,x_j)$, we also have $d(y_i,z_j)\geq d(x,z_j)$. This observation and the triangle inequality yield
    \[
        d(y_i,y_j)  \geq d(y_i,z_j)-d(z_j,y_j) \geq d(x,z_j)-d(z_j,y_j) = d(x,y_j) = r,
    \] 
    as claimed. This implies that the balls $B(y_i,\frac{r}{2})$ are disjoint. Let $v_d$ be the volume of the unit ball in $\R^d$. It follows that
    \begin{align*}
        \bigcup_{i=1}^{m} B(y_i,\tfrac{r}{2}) &\subset B(x,\tfrac{3r}{2})\setminus B(x,\tfrac{r}{2}) \\
        mv_d(\tfrac{r}{2})^d &\leq v_d(\tfrac{3r}{2})^d-v_d(\tfrac{r}{2})^d \\
        m           &\leq 3^d-1,
    \end{align*}
   as desired.
\end{proof}

As we can see, the proof relies on the ratio-preserving property of the homothety in the Euclidean space. As the bound in~\eqref{eq:rn} grows with the dimension, this notion is generally not applicable to infinite dimensional spaces. This motivates the following definition:

\begin{Def}
    A metric space $(X,d)$ has $\sigma$-\emph{finite metric dimension} if there is a countable family $\{Y_n\}_{n\in\mathbb{N}}$ of subsets of $X$ such that $\dim^{s_n}_{X}(Y_n)<\infty$ for some $s_n>0$ and 
    \begin{equation}\label{eq:union}
        X = \bigcup_{i=1}^{\infty}Y_n.
    \end{equation} 
\end{Def}

For example, the space of square-summable infinite sequences $d^2$ with the usual metric has $\sigma$-finite metric dimension. The link between this notion and the differentiation condition lies in the following result from Assouad and Quentin de Gromard~\cite{Assouad2006}. The proof of this Theorem is provided in Appendix~\ref{sec:proofthm}.

\begin{Th}\label{thm:equiv}
    Let $(X,d)$ be a separable metric space with $\sigma$-finite metric dimension. Then the differentiation condition~\eqref{eq:diff} holds $\rho$-a.e. $x\in X$ for any finite Borel measure $\rho$ and any bounded $\rho$-measurable function $f$.
  \end{Th}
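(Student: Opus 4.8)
The plan is to reduce the differentiation condition to the finiteness of metric dimension via a Vitali-type covering argument. The statement to prove connects a purely topological/metric hypothesis ($\sigma$-finite metric dimension) to the measure-theoretic differentiation condition~\eqref{eq:diff}. Since this theorem is attributed to Assouad and Quentin de Gromard~\cite{Assouad2006} and its proof is deferred to Appendix~\ref{sec:proofthm}, I would reconstruct the argument as follows.

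First, I would observe that by a standard truncation it suffices to treat the differentiation operator on $f = \mathbf{1}_E$ for Borel sets $E$, since bounded measurable functions are approximated in $L_1(\rho)$ by simple functions and the differentiation condition is stable under such approximation once one has a maximal-inequality bound. Thus the crux is to establish a \emph{weak-type maximal inequality} for the averaging operator $M_r f(x) = \frac{1}{\rho(B(x,r))}\int_{B(x,r)} |f|\dd\rho$, or rather for its upper limit as $r\downarrow 0$. Because $X = \bigcup_n Y_n$ with each $Y_n$ having finite metric dimension $\dim^{s_n}_X(Y_n) < \infty$ at some scale $s_n$, and since the conclusion is a $\rho$-almost-everywhere statement that localizes, I would fix one $Y_n$ and prove the differentiation property for $\rho$-a.e.\ $x \in Y_n$; the countable union then yields the full conclusion.

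The key step is a \emph{Besicovitch-style covering lemma driven by finite multiplicity}. Here is where Definition~\ref{def:dim} does the real work: given a family of balls $\ovl{B}(x_i,r_i)$ with centers in $Y_n$ and radii below $s_n$, if we extract a subfamily that is \emph{disconnected} (meaning $x_i \notin \ovl{B}(x_j,r_j)$ for $i\neq j$), then their multiplicity is bounded by $\beta = \dim^{s_n}_X(Y_n)$. I would run a greedy selection procedure on an arbitrary Vitali cover of a set $A \subseteq Y_n$ of positive measure: repeatedly pick a ball of (nearly) maximal radius whose center has not yet been absorbed, thereby producing a disconnected subfamily whose $\beta$-fold copies still cover $A$ up to the pieces discarded at each stage. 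The finite multiplicity then converts this into the bound $\rho(\{x \in Y_n : \limsup_{r\downarrow 0} M_r f(x) > \lambda\}) \leq \frac{\beta}{\lambda}\int |f|\dd\rho$, the desired weak-type estimate with constant controlled solely by $\beta$.

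The hard part will be making the greedy/disconnection extraction interact correctly with the \emph{at scale} $s_n$ restriction and with the fact that the dimension bound controls multiplicity at a single point rather than giving a uniform bounded-overlap covering in the classical Besicovitch sense; one must be careful that the disconnected subfamily genuinely covers the target set and that radii can be taken arbitrarily small (so that only the scale-$s_n$ regime is used, which is legitimate since differentiation is a small-radius phenomenon). Once the maximal inequality is in hand, the differentiation conclusion follows by the standard density argument: approximate $f$ by a function $h$ continuous or simple enough that $\lim_{r\downarrow 0} M_r(f-h)(x)$ is controlled by the maximal function of $f-h$, which the weak-type bound forces to vanish $\rho$-a.e.\ as the $L_1(\rho)$-error of the approximation tends to zero. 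Assembling these pieces over all $n$ completes the proof.
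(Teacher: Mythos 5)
Your proposal follows essentially the same route as the paper's appendix proof: extract by greedy selection a disconnected subfamily of bounded multiplicity from a Vitali-type cover (the paper's implication ($n_2$) $\Rightarrow$ ($n_3$)), deduce from it the weak-type maximal inequality ($n_3$) $\Rightarrow$ ($n_4$), conclude by approximating $f$ in $L^1(\rho)$ by bounded continuous functions (for which differentiation is automatic), and finally take the countable union over the sets $Y_n$. The one point you explicitly leave open --- making the greedy extraction interact correctly with the scale-$s_n$ restriction and with multiplicity being controlled only pointwise --- is resolved in the paper by first constraining the radii to lie in a decreasing sequence (rational radii, exhausted by finite sets, in the application) so that the maximal disconnected subfamilies can be chosen level by level; apart from that bookkeeping, and a slightly misleading opening reduction to indicator functions (the correct dense class is the continuous functions, which you do invoke later), the two arguments coincide.
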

  Note that the converse holds for complete metric spaces, as Kumari~\cite{Kumari2018} recently proved that any complete separable metric space that satisfies the differentiation condition also has $\sigma$-finite metric dimension.

Thus, to obtain universal consistency, it suffices to show that $X$ has $\sigma$-finite metric dimension. The completeness and separability requirement in Theorem~\ref{thm:equiv} can be achieved for a Wasserstein space given that the base metric space is complete and separable. A constructive proof is due to~\cite{Bolley2008}.
\begin{Th}\label{thm:bolley}
    If a metric space $X$ is complete and separable, then $\W_p(X)$ is also complete and separable.
\end{Th}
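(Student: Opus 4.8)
The plan is to prove separability and completeness separately, working with the subspace of measures of finite $p$-th moment (on which $W_p$ is finite-valued and hence a genuine metric); this entails no loss of generality for the spaces considered in this paper. Throughout, the Polish structure of $X$ is used mainly through Ulam's theorem (every finite Borel measure on a Polish space is tight) and Prokhorov's theorem.

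\emph{Separability.} First I would fix a countable dense set $D=\{x_1,x_2,\dots\}\subseteq X$ and let $\mathcal{D}$ be the countable collection of measures $\sum_{i=1}^{m}q_i\delta_{x_{j_i}}$ with $m\in\N$, rational weights $q_i>0$ summing to $1$, and centers $x_{j_i}\in D$. To show $\mathcal{D}$ is $W_p$-dense, take $\mu$ with finite $p$-th moment and $\epsilon>0$; by Ulam's theorem choose a compact $K$ carrying all but $\epsilon$ of the mass and of the tail $p$-th moment, cover $K$ by finitely many balls of radius $\epsilon$ centered at points of $D$, and transport each resulting piece of $\mu$ to the corresponding center. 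The induced coupling moves mass by at most $\epsilon$ over $K$ while the discarded part is controlled by the tail moment, so $W_p(\mu,\cdot)$ is small; rounding the weights to rationals keeps the approximant inside $\mathcal{D}$.

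\emph{Completeness.} Let $(\mu_n)$ be $W_p$-Cauchy. The crux is to show $(\mu_n)$ is tight. For $\epsilon>0$ pick $N$ with $W_p(\mu_n,\mu_N)<\epsilon$ for all $n\ge N$; the finite family $\mu_1,\dots,\mu_N$ is uniformly tight, so there is a compact $K$ with $\mu_i(X\setminus K)<\epsilon^p$ for $i\le N$. For $n>N$, let $\pi$ be an optimal coupling of $\mu_n$ and $\mu_N$ and let $K^\delta$ be the closed $\delta$-neighborhood of $K$; since $x\notin K^\delta$ forces either $y\notin K$ or $d(x,y)\ge\delta$, Markov's inequality gives
\[
\mu_n(X\setminus K^\delta)\le \mu_N(X\setminus K)+\pi(\{d(x,y)\ge\delta\})\le \epsilon^p+\delta^{-p}\,W_p(\mu_n,\mu_N)^p .
\]
Choosing $\delta$ suitably yields uniform tightness of the whole sequence. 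Prokhorov's theorem then produces a subsequence $\mu_{n_k}\rightharpoonup\mu$; lower semicontinuity of $W_p$ under weak convergence, together with the uniform moment bound coming from the Cauchy property, shows $\mu$ has finite $p$-th moment and $W_p(\mu_{n_k},\mu)\to0$. A Cauchy sequence possessing a convergent subsequence converges, so $\mu_n\to\mu$ and $\W_p(X)$ is complete.

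The main obstacle is precisely the tightness step, namely converting the purely metric Cauchy condition into uniform tightness via the coupling-plus-Markov estimate above; the remaining ingredients (lower semicontinuity, moment control, the subsequence principle) are routine. As an alternative, more constructive route in the spirit of the cited argument, I could instead pass to a subsequence with $W_p(\mu_{n_k},\mu_{n_{k+1}})<2^{-k}$, glue the successive optimal couplings by the gluing lemma into a single coupling on a common probability space, obtain an $L^p$-Cauchy sequence of $X$-valued random variables, and define $\mu$ as the law of its $L^p$-limit; this exhibits the limit explicitly and avoids invoking Prokhorov's theorem.
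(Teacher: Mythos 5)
The paper does not actually prove this statement itself: it defers entirely to Bolley (2008), whose argument is essentially the ``constructive'' route you sketch in your last sentence --- pass to a subsequence with $W_p(\mu_{n_k},\mu_{n_{k+1}})<2^{-k}$, glue the successive optimal couplings into a single process on a product probability space, and realize the limit measure as the law of an $L^p$-limit of $X$-valued random variables, thereby avoiding Prokhorov's theorem. Your primary argument is instead the standard textbook proof: separability via rational convex combinations of Dirac masses on a countable dense set, and completeness via the tightness-of-Cauchy-sequences estimate followed by Prokhorov's theorem and lower semicontinuity of $W_p$ under weak convergence. Both routes are correct and both suffice for the role the theorem plays in this paper. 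Two small points on your write-up: (i) the closed $\delta$-neighborhood $K^\delta$ of a compact set need not be compact in a general Polish space, so your Markov estimate should be read as establishing the ``finite union of $\delta$-balls'' characterization of uniform tightness (cover $K$ by finitely many $\delta$-balls, then intersect such unions over a sequence $\delta_k\downarrow 0$ to produce a closed totally bounded, hence compact, set using completeness of $X$); (ii) your restriction to measures of finite $p$-th moment is the right reading of the statement, since $W_p$ is only an extended metric on all of $\pp(X)$ --- the paper makes the same restriction silently. What Bolley's route buys is an explicit limit object and independence from Prokhorov's compactness theorem; what yours buys is brevity and reliance only on standard weak-convergence machinery.
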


\section{The $k_n$-NN classifier is not universally consistent on $\mathcal{W}_p((0,1))$}\label{sec:counter}

\begin{proof}[Proof of Theorem~\ref{th:counter}]

    We will construct a Borel probability measure $\rho$ on $\mathcal{W}_p((0,1))$, and for any $\mu\in \mathcal{P}((0,1))$ a conditional probability $\eta(\mu)=\mathbb{P}(Y=1\mid\mu)$ so that the $k_n$-NN classifier is not weakly consistent.


For any $p\geq 1$, the $p$-Wasserstein distance between $\mu,\nu\in\mathcal{P}(\R)$ is given by
\[
    W^p_p(\mu,\nu)=\int_0^1\lvert f_{\mu}(x)-f_{\nu}(x) \rvert^p \ dx,
\] 
where $f_{\mu}$ and $f_{\nu}$ are the generalized quantile functions (GQF): $f_{\mu}(p) = \inf\{x\in\R\cup\{-\infty\} \mid p\leq F_{\mu}(x)\}$ where $F_{\mu}$ is the cumulative distribution function of $\mu$. Note that GQF functions are non-decreasing and left-continuous, and any function with these properties gives rise to a probability measure. 

With this in mind, we construct a family of GQF functions as follows: let $(a_i)_{i\in\N}$ be a strictly increasing sequence of positive numbers satisfying $a_i<1$ for all $i\in\N$ and $\sum_{i=1}^{\infty}a^{p}_i/2^i<\infty$. Define $I_i=[1-1/2^{i-1},1-1/2^i)$ for $i\in\N$; thus $\bigcup_{i\in\N}I_i = [0,1)$. Define a staircase function $f_0:[0,1)\to\R$ by:
\[
    f_0 = \sum_{i=1}^{\infty} a_i\mathbf{1}_{I_i}.
\] 
For $m\in\N$, define $f_m:[0,1)\to\R$ to be the same as $f_0$, except the $m$-th step size is widen to $a_{m+1}$, that is,
\[
    f_m =\sum_{i=1}^{m-1}a_i\mathbf{1}_{I_i} + a_{m+1}\mathbf{1}_{I_m}+\sum_{i=m+1}^{\infty}a_i\mathbf{1}_i.
\] 
Note that for any $m\geq 0$, the measure $\mu_m$ associated with $f_m$ is supported in $\{a_i \mid i\in\N\}\subset (0,1)$. Thus $\mu_m\in\mathcal{P}((0,1))$.

Notice that, for any distinct $j,m\geq 1$, $f_j$ and $f_m$ differ on $I_j$ and $I_m$, while $f_j$ and $f_0$ differ only on $I_j$. Therefore, $W_p(\mu_j,\mu_m)>W_p(\mu_j,\mu_0)$, and similarly, $W_p(\mu_j,\mu_m)>W_p(\mu_m,\mu_0)$. Consequently, the set 
\[U=\{\mu_m\mid m\in\N\cup \{0\}\}\] 
has infinite metric dimension at any scale under $W_p$, since for any $s>0$, there exists $M\in\N$ such that $W_p(\mu_m,\mu_0)<s$ for any $m\geq M$, and any two closed balls in $\{\overline{B}(\mu_m,W_p(\mu_m,\mu_0))\}_{m\in\N}$ intersect at a single point $\mu_0$.

We now define a Borel measure $\rho$ on $\mathcal{W}_p((0,1))$ as follows: $\rho(\{\mu_0\})=1/2$, $\rho(\{\mu_m\})=1/2^{m+1}$ for all $m\geq 1$ and $\mu(\mathcal{P}(\R)\setminus U)=0$. We give all $\mu_m$ deterministic labels: $Y(\mu_0)=1$ and $Y(\mu_m)=0$ for all $m\geq 1$. Let $D_n$ be a sample of $n$ measures under $\rho$ and choose $k_n=\sqrt{n}$. Let $X_n$ be the random variable of number of $\mu_0$'s in $D_n$. A key observation is that the classification of $\mu_m$ for any $m\geq 1$ will be wrong if $X_n> k_n=\sqrt{n}$.

Thus we are interested in the events of $D_n$ in which there are sufficient numbers of $\mu_0$. Since $X_n\sim\operatorname{Binomial}(n,1/2)$, we can utilize the Hoeffding's inequality:
\begin{align*}
    \mathbb{P}(X_n>\sqrt{n}) &= 1-\mathbb{P}(X\leq \sqrt{n}) \\
                                  &\geq 1-\exp\left(-\frac{c(n/2-\sqrt{n})^2}{n}\right) \\
                                  &= 1-\exp\left(-c\left(\frac{\sqrt{n}}{2}-1\right)^2\right),
\end{align*}
for some constant $c>0$. Let $\mu$ be a sample from $U$ under $\rho$ and $\widehat{Y}_n(\mu)$ be the classification of $\mu$ using the nearest neighbors in $D_n$. As the classification is incorrect if and only if $\mu\not=\mu_0$, we have that 
\begin{equation*}
    \lim_{n\to\infty}\mathbb{E}\left[\mathbb{P}(\widehat{Y}_n(\mu)\not= Y(\mu)|D_n)\right]\geq \lim_{n\to\infty}\mathbb{P}(X_n>\sqrt{n})\mathbb{P}(\mu\not=\mu_0)\geq \frac{1}{2}\lim_{n\to\infty}\mathbb{P}(X_n>\sqrt{n}) = \frac{1}{2}.
\end{equation*}
However, the Bayes risk is zero since the labels are deterministic. We conclude that the $k_n$-NN classifier is not weakly consistent for the measure $\rho$ on $\mathcal{W}_p((0,1))$.

\end{proof}

\section{Universal consistency when the base space is $\sigma$-uniformly discrete}\label{sec:sigma}

We prove here the first positive result. The main idea is that, whenever $(X,d)$ is $\sigma$-uniformly discrete, the metric space $(\mathcal{P}_r(X),W_p)$ has $\sigma$-finite metric dimension, which implies that the $k_n$-NN is universally consistent on $(\mathcal{P}_r(X),W_p)$.\\
\begin{proof}[Proof of Theorem~\ref{thm:main1}]
    Since $(X,d)$ is $\sigma$-uniformly discrete, there exists $\{A_n\}_{n\in\mathbb{N}}$ and $\{\Delta_n\}_{n\in\mathbb{N}}$ such that $A_n\subseteq A_{n+1}\subseteq X$ for all $n\in\mathbb{N}$ and $d(x,y) \geq \Delta_n>0$ for any distinct $x,y\in A_n$. Recall that
\[
    \mathcal{P}_r(X) = \left\{\sum_{i=1}^k r_i\delta_{x_i}\in\pp(X) \ \Big| \  r_i\in\mathbb{Q}, \quad k\in\mathbb{N} \right\}.
\] 
We can write $\mathcal{P}_r(X)=\bigcup_{n}\mathcal{A}_n$ where
\[\mathcal{A}_{n} = \left\{ \sum_{i=1}^{k}\frac{a_i}{n!}\delta_{x_i}\in\pp_r(A_n) \ \Big| \ 0\leq a_i\leq n!, \quad k\in\mathbb{N}  \right\}.\]
As $A_n\subset A_{n+1}$, we have $\mathcal{A}_n\subset\mathcal{A}_{n+1}$ for all $n\in\mathbb{N}$. In addition, for any distinct $\mu,\nu\in\mathcal{A}_N$, at least a mass of $\frac{1}{n!}$ must be transported by at minimum distance of $\Delta_n$, yielding $W_p(\mu,\nu)\geq \frac{\Delta_n}{n!}$. It follows that, if we choose $s_n=\frac{\Delta_n}{2n!}$, the family of closed balls $\{\ovl{B}(\mu,r_{\mu})\}_{\mu\in\mathcal{A}_n}$ where $r_{\mu}\in (0,s_n)$ are mutually disjoint. In other words, any disconnected family of closed balls centered in $\mathcal{A}_n$ at scale $s_n$ has zero multiplicity. Hence, $\pp_r(X)$ has $\sigma$-finite metric dimension and so the $k_n$-NN classifier is universally consistent on $(\pp_r(X),W_p)$.
\end{proof}

\section{Weakly positively curved spaces}\label{sec:wpc}

Going back to the proof of Example~\ref{prop:rn}, we see that the proof of the upper bound of $\dim_{\R^d}(\R^d)$ relies on its underlying geometry, specifically, its similarity-preserving homothety. Some of our results can be proved in the same spirit as this example, where the Euclidean lines are replaced by a similar notion in a curved space.

\begin{Def}
    In a metric space $(X,d)$, a curve $\{ x_{1,2}^t\in X : t\in [0,1] \}$ is a \emph{constant speed geodesic} between $x_1$ and $x_2\in X$ if for any $s,t\in [0,1]$,
    \begin{equation}\label{eq:csspd}
        d(x_{1,2}^s,x_{1,2}^t) = |t-s|d(x_1,x_2).
    \end{equation} 
\end{Def}

In the case of $\W_1(\R^n)$, it is easy to check that $\mu^t =(1-t)\mu_1+t\mu_2$ is a constant speed geodesic from $\mu_1$ to $\mu_2$: for any $s,t\in [0,1]$ 
\begin{align*}
    W_1(\mu^s,\mu^t) &= \sup_{\|\nabla f\|_{\infty}\leq 1}\int f \ (d \mu^s - d\mu^t) \\
                     &= \sup_{\|\nabla f\|_{\infty} \leq 1}(t-s)\int f \ (d \mu_1 - d\mu_2)  \\
                                                                                             &= |t-s| W_1(\mu_1,\mu_2).
\end{align*} 

We will be studying some geometrical properties of $\W_p(\R^d)$ through these geodesics. Specifically, the following inequality will be used to measure the curvature of geodesic triangles.  

\begin{Def}
    A metric space $(X,d)$ is a \emph{weakly positively curved} space (\WPC space) if for any $x_1,x_2,x_3\in X$, there is a constant speed geodesic $x^t_{1,2}$ connecting $x_1$ and $x_2$ and $x^t_{1,3}$ connecting $x_1$ and $x_3$ that satisfy the following \emph{comparison inequality}:
    \begin{equation}\label{eq:pcineq}
        d(x^t_{1,2},x^t_{1,3}) \geq t d(x_2,x_3).
    \end{equation}
    for any $t\in [0,1]$
\end{Def}

\begin{figure}[tpb]
    \centering
    \includegraphics[width=0.3\linewidth]{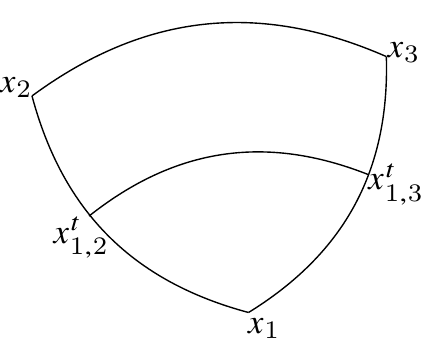}
    \caption{A triangle in a WPC space.}
    \label{fig:wpc}
\end{figure}

Roughly speaking, a metric space is a \WPC space if the sides of every geodesic triangle are curved outward. It is a weaker notion of \emph{positively curved} space (\PC space) defined in Lemma~\ref{th:w2pc} below. It turns out that both $\mathcal{W}_1(\R^d)$ and $\mathcal{W}_2(\R^d)$ are \WPC spaces.

\begin{Th}\label{thm:w1wpc}
    $\W_1(\R^d)$ is a \WPC space. 
\end{Th}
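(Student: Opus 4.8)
The plan is to use the affine interpolation curves that the excerpt has already verified to be constant speed geodesics in $\W_1(\R^d)$, together with the Kantorovich--Rubinstein dual formula, which is linear in the underlying signed measure. Given arbitrary $\mu_1,\mu_2,\mu_3\in\pp(\R^d)$, I would take the two geodesics emanating from the common vertex $\mu_1$,
\[
    \mu^t_{1,2}=(1-t)\mu_1+t\mu_2,\qquad \mu^t_{1,3}=(1-t)\mu_1+t\mu_3,
\]
each of which is a constant speed geodesic by the computation displayed just before the statement. These are the geodesics required in the definition of a \WPC space.

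The key observation is that the two interpolants differ by exactly a $t$-scaled copy of the difference of the endpoints: viewed as signed measures, the $(1-t)\mu_1$ terms cancel, so
\[
    \mu^t_{1,2}-\mu^t_{1,3}=t(\mu_2-\mu_3).
\]
Feeding this into the dual representation $W_1(\mu,\nu)=\sup_{\|\nabla f\|_{\infty}\leq 1}\int f\,(d\mu-d\nu)$ and pulling the nonnegative scalar $t$ out of the supremum yields
\[
    W_1(\mu^t_{1,2},\mu^t_{1,3})=\sup_{\|\nabla f\|_{\infty}\leq 1}\int f\,d(\mu^t_{1,2}-\mu^t_{1,3})
    =t\sup_{\|\nabla f\|_{\infty}\leq 1}\int f\,d(\mu_2-\mu_3)=t\,W_1(\mu_2,\mu_3).
\]
In particular the comparison inequality $d(x^t_{1,2},x^t_{1,3})\geq t\,d(x_2,x_3)$ of~\eqref{eq:pcineq} holds, in fact with equality, so $\W_1(\R^d)$ is a \WPC space.

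I expect no genuine obstacle here: the whole argument rests on $W_1$ admitting the affine interpolation as a geodesic and on its dual norm being positively homogeneous and translation-cancelling in the signed measure. The only points deserving a word of care are that $\mu^t_{1,2}$ and $\mu^t_{1,3}$ are genuine probability measures, so that the duality formula applies — immediate, since they are convex combinations of probability measures — and that one may factor the scalar $t\geq 0$ through the supremum. It is worth flagging that the computation produces equality rather than a strict inequality; this is consistent with $\W_1$ being only \emph{weakly} positively curved, and it foreshadows the sharper requirement in the notion of a \PC space introduced in Lemma~\ref{th:w2pc}.
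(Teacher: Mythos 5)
Your proof is correct and follows essentially the same route as the paper: both use the affine interpolations $(1-t)\mu_1+t\mu_2$ and $(1-t)\mu_1+t\mu_3$ as the geodesics and apply the Kantorovich--Rubinstein duality to factor out $t$, obtaining the comparison inequality with equality. Your additional remarks on why the duality applies and on the equality versus strict inequality are accurate but do not change the argument.
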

\begin{proof}
Let $\mu_1,\mu_2,\mu_3\in \pp(\R^d)$. Then for the geodesics $\mu^t_{1,2}=(1-t)\mu_1+t\mu_2$ and $\mu^t_{1,3}=(1-t)\mu_1+t\mu_3$, we have
\begin{align*}
        W_1(\mu^t_{1,2},\mu^t_{1,3}) &= \sup_{\|\nabla f\|_{\infty}\leq 1}\int f \ (d \mu_{1,2}^t - d\mu_{1,3}^t) \\
                     &= \sup_{\|\nabla f\|_{\infty} \leq 1}t\int f \ (d \mu_2 - d\mu_3)  \\
                                                                                             &= t W_1(\mu_2,\mu_3).
\end{align*}
\end{proof}

\begin{Th}\label{thm:w2wpc}
    $\W_2(\R^d)$ is a {\WPC} space.
\end{Th}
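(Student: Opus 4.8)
The plan is to realize the geodesics as McCann displacement interpolations and to deduce the comparison inequality~\eqref{eq:pcineq} from the nonnegative-curvature (Alexandrov) comparison applied twice. Fix $\mu_1,\mu_2,\mu_3\in\pp(\R^d)$, let $\Omega_{1j}$ be an optimal coupling of $\mu_1$ and $\mu_j$ for $j=2,3$, and set $e_t(x,y)=(1-t)x+ty$. I would take as my geodesics $\mu^t_{1,j}=(e_t)_\#\Omega_{1j}$; these are constant-speed geodesics from $\mu_1$ to $\mu_j$, and in particular $W_2(\mu^t_{1,j},\mu_1)=tW_2(\mu_1,\mu_j)$. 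The key difficulty to anticipate is that one \emph{cannot} establish~\eqref{eq:pcineq} by exhibiting a single good coupling of $\mu^t_{1,2}$ and $\mu^t_{1,3}$, since any coupling only \emph{upper}-bounds $W_2$; and the naive pullback of the optimal coupling of the two interpolants along the two interpolation maps leaves an uncontrolled inner-product cross term. For this reason I would route the argument through a curvature comparison and use it twice.

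Concretely, I would first prove the following comparison lemma: for any $w,z_0,z_1\in\pp(\R^d)$ and the displacement geodesic $\gamma^t$ from $z_0$ to $z_1$,
\[
    W_2^2(w,\gamma^t)\ \ge\ (1-t)W_2^2(w,z_0)+tW_2^2(w,z_1)-t(1-t)W_2^2(z_0,z_1).
\]
To prove it I would take an optimal coupling $\lambda$ of $w$ and $\gamma^t$ and the optimal coupling $\Omega$ of $z_0,z_1$ that induces $\gamma^t=(e_t)_\#\Omega$, and glue them over their common $\gamma^t$-marginal (disintegrating $\Omega$ along $e_t$) to get a measure $\Theta$ on triples $(u,x_0,x_1)$ whose $(u,e_t(x_0,x_1))$-marginal is $\lambda$ and whose $(x_0,x_1)$-marginal is $\Omega$. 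The $(u,x_0)$- and $(u,x_1)$-marginals are couplings of $(w,z_0)$ and $(w,z_1)$, so their costs are at least $W_2^2(w,z_0)$ and $W_2^2(w,z_1)$, whereas the $(x_0,x_1)$-cost equals $W_2^2(z_0,z_1)$ exactly. Integrating the elementary identity
\[
    |u-((1-t)x_0+tx_1)|^2=(1-t)|u-x_0|^2+t|u-x_1|^2-t(1-t)|x_0-x_1|^2
\]
against $\Theta$, and using $\int|u-e_t(x_0,x_1)|^2\dd\Theta=W_2^2(w,\gamma^t)$, then yields the claimed inequality in the correct direction.

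With the lemma in hand I would apply it twice: first with external point $w=\mu^t_{1,3}$ along the geodesic $s\mapsto\mu^s_{1,2}$ at $s=t$, and second with external point $w=\mu_2$ along $s\mapsto\mu^s_{1,3}$ at $s=t$. Writing $A=W_2^2(\mu_1,\mu_2)$, $B=W_2^2(\mu_1,\mu_3)$, $C=W_2^2(\mu_2,\mu_3)$ and using $W_2^2(\mu^t_{1,3},\mu_1)=t^2B$, the second application bounds $W_2^2(\mu_2,\mu^t_{1,3})$ from below by $(1-t)A+tC-t(1-t)B$; substituting this into the first makes the $A$- and $B$-terms cancel identically, leaving $W_2^2(\mu^t_{1,2},\mu^t_{1,3})\ge t^2C$, which is exactly~\eqref{eq:pcineq}.

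The main obstacle I expect is measure-theoretic rather than algebraic: the gluing must be carried out for arbitrary $z_0,z_1$ with no Brenier map available, so I would justify the disintegration of $\Omega$ over the interior interpolant $\gamma^t$ and the fact that the $(x_0,x_1)$-marginal of $\Theta$ remains the optimal geodesic-inducing plan, relying on McCann's theorem that interior displacement interpolants ($0<t<1$) determine the geodesic uniquely, i.e.\ the transport lines do not cross at interior times. Once the lemma is set up rigorously, the remaining steps---checking that the two induced marginals are genuine couplings giving the inequalities in the right direction, and verifying that the final substitution cancels the $A$- and $B$-terms exactly---are routine.
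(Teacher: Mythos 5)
Your proposal is correct and follows essentially the same route as the paper: both deduce the \WPC comparison inequality~\eqref{eq:pcineq} by applying the quadruple (PC) inequality~\eqref{eqn:pc1} twice, with the same cancellation of the $W_2^2(\mu_1,\mu_2)$ and $W_2^2(\mu_1,\mu_3)$ terms. The only difference is that you additionally prove the PC inequality itself via the standard gluing/variance-identity argument, whereas the paper simply cites it from Ambrosio--Gigli--Savar\'e (Lemma~\ref{th:w2pc}); that extra step is also correct.
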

\begin{proof}
    We start with the fact that $\W_2(\R^d)$ satisfies a stronger notion than \WPC~\cite[Section 7.3]{Ambrosio2005}:
\begin{itemize}
    \item[]
        \vspace*{-6mm}
        \begin{Lemma}\label{th:w2pc}
            $\W_2(\R^d)$ is a \emph{positively curved} space ({\PC} space). In other words, for any $\mu_1,\mu_2,\mu_3\in \pp_2(R^n)$ and any constant speed geodesic $\mu_{1,2}^t$ from $\mu_1$ to $\mu_2$, we have the following inequality:
    \begin{equation}\label{eqn:pc1}
        W_2^2 (\mu_{1,2}^t,\mu_3) \geq (1-t)W_2^2(\mu_1,\mu_3)+tW_2^2(\mu_2,\mu_3)-t(1-t)W_2^2(\mu_1,\mu_2).
    \end{equation}
\end{Lemma}
\end{itemize}

For more details on \PC spaces and their cone structures, see~\cite[Chapter 12.3]{Ambrosio2005}. It turns out that any \PC space is also a \WPC space, as we will show below.

Let $\mu_1,\mu_2,\mu_3$ and $\mu_{1,2}^t$ be as in Lemma~\ref{th:w2pc} and $\mu_{1,3}^t$ be a constant speed geodesic from $\mu_1$ to $\mu_3$. Applying~\eqref{eqn:pc1} to the measures $\mu_1,\mu_3,\mu_{1,2}^t$, we obtain
    \begin{align*}
         W_2^2(\mu_{1,3}^t,\mu_{1,2}^t) &\geq (1-t)W_2^2(\mu_1,\mu_{1,2}^t)+tW_2^2(\mu_3,\mu_{1,2}^t)-t(1-t)W_2^2(\mu_1,\mu_3) \\
                                       &\geq (1-t)W_2^2(\mu_1,\mu_{1,2}^t) \\
                                       & \qquad +t[(1-t)W_2^2(\mu_1,\mu_3)+tW_2^2(\mu_2,\mu_3)-t(1-t)W_2^2(\mu_1,\mu_2)] \\
                                       & \qquad -t(1-t)W_2^2(\mu_1,\mu_3) \\
                                       &= (1-t)W_2^2(\mu_1,\mu_{1,2}^t) + t^2_2W_2^2(\mu_2,\mu_3)-t^2(1-t)W_2^2(\mu_1,\mu_2) \\
                                       &= t^2_2W_2^2(\mu_2,\mu_3),
    \end{align*}
    where we used $W_2(\mu_1,\mu^t_{1,2})=tW_2(\mu_1,\mu_2)$ in the last step.
\end{proof}

The following lemma is the main tool that will help us prove $\sigma$-finite dimensionality of metric spaces in our interest by linking them back to the Euclidean spaces (Example~\ref{prop:rn}). 

\begin{figure}[tpb]
    \centering
    \includegraphics[width=0.8\linewidth]{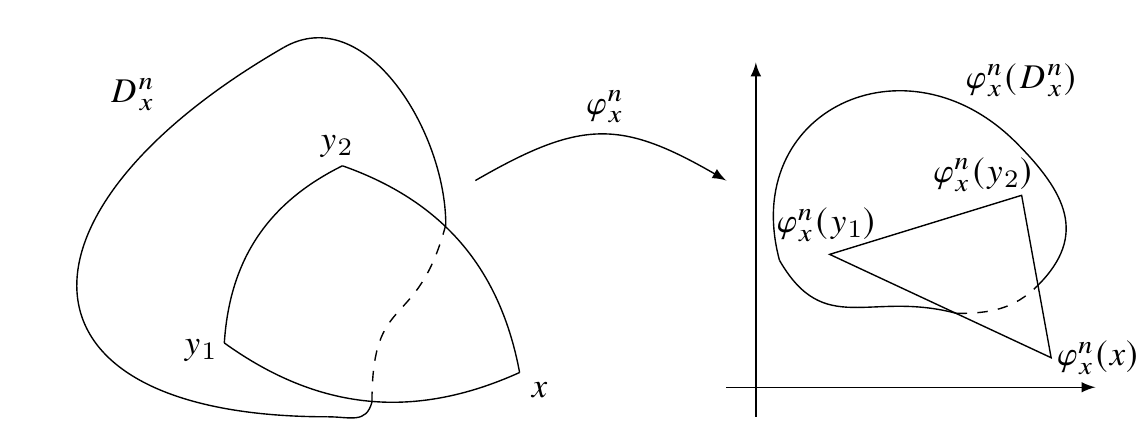}
    \caption{A schematic picture of the setup in Lemma~\ref{lemma:helper}.}
    \label{fig:}
\end{figure}
\begin{Lemma}\label{lemma:helper}
    Let $(X,d)$ be a complete separable \WPC space where $X=\cup_{n\in \mathbb{N}}A_n$. For each $x\in X$ and each $y\in A_n$, let $\{y_x^t\}_{t\in [0,1]}$ be a specific choice of geodesic from $x$ to $y$. With this notion, we define a cone emanating from $x$ to a set $D\subset X$:
\[
    \mathcal{G}_x(D) = \{ y_x^t \ | \ y\in D, \quad  t\in [0,1] \}. 
\] 
Suppose that for each $n\in \N$, there exists $s_n>0$ with the following property: for any $x\in X$ such that $D^n_{x}=B(x,s_n)\cap A_n\not=\emptyset$, there exists a function $\varphi_{x}^n:\mathcal{G}_x(D^n_x)\to \R^{d_n}$, for some constant $d_n$, such that the following inequalities hold for all $y_1,y_2\in \mathcal{G}_x(D^n_x)$:
    \begin{align}
        d(x,y_1) &\geq c_n\|\varphi_{x}^n(x)-\varphi_{x}^n(y_1)\|_2 \label{eq:rnlower}\\
        d(y_1,y_2) &\leq C_n\|\varphi_{x}^n(y_1)-\varphi_{x}^n(y_2)\|_2, \label{eq:rnupper}
    \end{align}
    for some constants $c_n,C_n>0$ independent of $x$. Then $(X,d)$ has $\sigma$-finite metric dimension. 
\end{Lemma}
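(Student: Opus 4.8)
The plan is to show that each of the sets $A_n$ has finite metric dimension at the scale $s_n$ supplied by the hypothesis; since $X=\bigcup_n A_n$, this immediately yields $\sigma$-finite metric dimension. So fix $n$ and consider a family of closed balls $(\ovl B(x_i,r_i))_{1\le i\le m}$ that is disconnected at scale $s_n$ with all centers $x_i\in A_n$. To bound the multiplicity (Definition~\ref{def:dim}) it suffices to bound $m$ under the assumption that the balls share a common point $x\in\bigcap_i\ovl B(x_i,r_i)$. Two observations get us started. First, because $d(x,x_i)\le r_i<s_n$, every center lies in $D^n_x=B(x,s_n)\cap A_n\ne\emptyset$, so the map $\varphi^n_x$ is available on the whole cone $\mathcal G_x(D^n_x)$. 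Second, the common point together with disconnectedness gives the key separation inequality $d(x_i,x_j)>\max\{d(x,x_i),d(x,x_j)\}$ for $i\ne j$: indeed $d(x,x_j)\le r_j<d(x_i,x_j)$ and symmetrically. (If $x$ were one of the centers this already forces $m=1$, so for $m\ge 2$ we have $d(x,x_i)>0$ for all $i$.)

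Next I would replay the homothety step of Example~\ref{prop:rn}, with the chosen geodesics playing the role of the Euclidean rays. Put $r=\min_i d(x,x_i)>0$ and project each center to the common radius $r$ by setting $p_i=(x_i)^{t_i}_x$ with $t_i=r/d(x,x_i)\in(0,1]$, so that $d(x,p_i)=r$ by the constant-speed property. The goal of this step is the lower bound $d(p_i,p_j)>r$. Assuming WLOG $d(x,x_i)\le d(x,x_j)$ (hence $t_j\le t_i$), the comparison inequality \eqref{eq:pcineq} for the chosen geodesics from $x$ gives $d\big((x_i)^{t_i}_x,(x_j)^{t_i}_x\big)\ge t_i\,d(x_i,x_j)$, while $d\big((x_j)^{t_i}_x,p_j\big)=(t_i-t_j)d(x,x_j)=t_i d(x,x_j)-r$. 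The triangle inequality then yields $d(p_i,p_j)\ge t_i\big(d(x_i,x_j)-d(x,x_j)\big)+r>r$, where the strict inequality is exactly the key separation inequality above. This is the heart of the argument and the point where the \WPC hypothesis is used.

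Finally I would transport this configuration into $\R^{d_n}$ and count by volume. Applying \eqref{eq:rnlower} at the apex gives $\|\varphi^n_x(x)-\varphi^n_x(p_i)\|_2\le d(x,p_i)/c_n=r/c_n$, so all images lie in a Euclidean ball of radius $r/c_n$ about $\varphi^n_x(x)$; applying \eqref{eq:rnupper} gives $\|\varphi^n_x(p_i)-\varphi^n_x(p_j)\|_2\ge d(p_i,p_j)/C_n>r/C_n$, so the images are pairwise separated by more than $r/C_n$. Hence the balls $B(\varphi^n_x(p_i),\tfrac{r}{2C_n})$ are disjoint and contained in $B(\varphi^n_x(x),\tfrac{r}{c_n}+\tfrac{r}{2C_n})$, and comparing volumes gives $m\le\big(1+2C_n/c_n\big)^{d_n}$. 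This bound depends only on $n$ (not on $x$, $r$, or the family), so $\dim^{s_n}_X(A_n)\le(1+2C_n/c_n)^{d_n}<\infty$ and $X$ has $\sigma$-finite metric dimension.

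I expect the main obstacle to be precisely the projection-to-common-radius step. One cannot feed the centers directly into $\varphi^n_x$ and invoke Example~\ref{prop:rn} in the image, because the mismatch between the constants $c_n$ and $C_n$ (together with the genuinely different radii $r_i$) would only bound the overlap within each dyadic range of radii and allow infinitely many ranges. Collapsing all centers to a single radius $r$ removes this scale dependence, but it forces one to produce a lower bound on $d(p_i,p_j)$ along curved geodesics; this is where the comparison inequality \eqref{eq:pcineq} is essential, and a secondary subtlety is that the fixed choice of geodesics $\{y^t_x\}$ must be taken compatible with \eqref{eq:pcineq} for all pairs sharing the apex $x$ (as is the case for the canonical geodesics of $\W_1$ and $\W_2$ used in the applications).
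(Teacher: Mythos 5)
Your proposal is correct and follows essentially the same route as the paper's proof: project the centers along the chosen geodesics onto a common sphere of radius $r=\min_i d(x,x_i)$, use the comparison inequality~\eqref{eq:pcineq} plus the disconnectedness-induced separation $d(x_i,x_j)>\max\{d(x,x_i),d(x,x_j)\}$ to get $d(p_i,p_j)>r$, and then transport via $\varphi^n_x$ to a Euclidean packing bound $m\le(1+2C_n/c_n)^{d_n}$. Your closing remarks (the need to collapse to a single radius, and the compatibility of the fixed geodesic choice with~\eqref{eq:pcineq}) accurately identify the same subtleties present in the paper's argument.
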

\begin{proof}
    Let $\{B(y_i,r_i)\}_{1\leq i \leq m}$ be a disconnected family of closed balls centered in $D^n_x$ such that $r_i<s_n$ for all $i\in\N$, and assume that $x\in\bigcap_{i\in\N}B(y_i,r_i)$. Thus $B(x,s_n)\cap A_n\not=\emptyset$, so there exists a function $\varphi_x^n$ that satisfies~{\ref{eq:rnlower}} and~{\ref{eq:rnlower}}. Let $\{y_i^t\}_{t\in[0,1]}$ be the geodesic between $x$ and $y_i$. From the comparison inequality~\eqref{eq:pcineq}, we have  
\begin{equation}
    \begin{split}
    d(y_{i}^t,y_j^t) &= t d(y_i,y_j) \\
                         &\geq t\max \{d(x,y_i),d(x,y_j)\}. \label{eq:XcompW}
    \end{split}
\end{equation} 
Denote $r_i =d(x,y_i)$ and let $r$ be the minimum of all the $r_i$'s. With $\alpha_i=r/r_i$, it follows from the property of constant speed geodesics that
\begin{equation}\label{eq:cspd}
d(x,y_i^{\alpha_i})= r.
\end{equation} 
In other words, $y_i^{\alpha_i}$ is the projection of $y_i$ on the sphere of radius $r$ centered at $x$. Focusing on each pair of $i$ and $j$, we assume without loss of generality that $r_i\leq r_j$. The triangle inequality and~\eqref{eq:XcompW} yield
\begin{equation}\label{eq:projr}
        \begin{split}
        d(y^{\alpha_i}_i,y^{\alpha_j}_j)&\geq d(y^{\alpha_i}_i,y^{\alpha_i}_j)-d(y^{\alpha_i}_j,y^{\alpha_j}_j)\\
                                    &\geq \alpha_i d(y_i,y_j)-(\alpha_i-\alpha_j)d(x,y_j) \\
                                    &> \alpha_i d(x,y_j)-(\alpha_i-\alpha_j)d(x,y_j) \\
                                    &= \alpha_j d(x,y_j)\\
                                    &= r.
    \end{split}
\end{equation}
Using~\eqref{eq:rnlower} and~\eqref{eq:rnupper}, 
\[
    \|\varphi_{x}^n(x)-\varphi_{x}^n(y_i^{\alpha_i})\|_2 \leq c_n^{-1}d(x,y_i^{\alpha_i})=c_n^{-1}r
\] 
and
\[
    \|\varphi_{x}^n(y_i^{\alpha_i})-\varphi_{x}^n(y_j^{\alpha_j})\|_2 \geq C_n^{-1}d(y_i^{\alpha_i},y_j^{\alpha_j})\geq C_n^{-1}r.
\] 

We thus have a packing of points $\{\varphi_{x}^n(y_i^{\alpha_i})\}_{1\leq i \leq m}$ inside a closed ball $\ovl{B}(\varphi^n_x(x),c_n^{-1}r)$ which are at least $C_n^{-1}r$ apart from each other. In other words, the enlarged ball $\ovl{B}(\varphi_{x}^n(x),c_n^{-1}r+C_n^{-1}r/2)$ contains all $m$ disjoint balls $B(\varphi_{x}^n(y_i^{\alpha_i}),C_n^{-1}r/2)$. Hence, it must be the case that
    \[
        m\leq \left(\frac{c_n^{-1}r}{C_n^{-1}r/2}+1 \right)^{d_n} = \left(\frac{2C_n}{c_n} +1 \right)^{d_n}.
    \] 
    In particular, $\dim^{s_n}_{X}(A_n)$ is finite and independent of $r$, giving us the conclusion that $k_n$-NN classifier is universally consistent on $X$.
\end{proof}

\section{Universal consistency: other examples}

\subsection{Finitely supported measures}\label{sec:th1}

\begin{proof}[Proof of Theorem~\ref{thm:main0}]
    Writing $X=\{x_1,\ldots,x_d\}$, we construct a map $\varphi:\pp(X)\to \R^{d}$  as follows: 
\[
    \varphi\Big(\sum_{i=1}^{d} v_i\delta_{x_i}\Big)=(v_1,\ldots,v_{d}).
\]

The special thing about the $W_1$ metric is that, given any $\mu, \nu\in\pp(X)$, each measure in the geodesic $\{(1-t)\mu+t\nu\}_{t\in[0,1]}$ is also supported on $X$. Therefore, if we fix $\mu=\sum_{i=1}^{d} a_i\delta_{x_i}$ and let $\nu_1, \nu_2$ be any measures along two different geodesics starting from $\mu$, then we can write $\nu_1=\sum_{i=1}^{d} b_i\delta_{x_i}$ and $\nu_2=\sum_{i=1}^{d} c_i\delta_{x_i}$. The optimal transport from $\nu_1$ to $\nu_2$ must transfer the mass difference at $x_i$, which is $|b_i-c_i|$, by not more than $M=\max_{i,j}d(x_i,x_j)$. This gives us an upper bound 
    \[
        W_1(\nu_1,\nu_2) \leq M \sum_{i=1}^{d}|b_i-c_i|  \leq {d}^{\frac{1}{2}}M \Big[\sum_{i=1}^{d}|b_i-c_i|^2\Big]^{\frac{1}{2}}\leq d^{\frac1{2}}M\|\varphi(\nu_1)-\varphi(\nu_2)\|_2.
    \]
    On the other hand, the optimal transport from $\mu$ to $\nu_1$ must transfer a mass of size $|a_i-b_i|$ by at least $\delta=\min_{i\not= j}d(x_i,x_j)$. Therefore, 
    \[
        W_1(\mu,\nu_1) \geq \sum_{i=1}^{d}\delta|a_i-b_i| \geq \delta\Big[\sum_{i=1}^{d}(a_i-b_i)^2\Big]^{\frac1{2}} = \delta\|\varphi(\mu)-\varphi(\nu_1)\|_2.
    \] 
    Thus, Lemma~\ref{lemma:helper} applies and we have that $k_n$-NN classifier is universally consistent on $(\pp(X),W_1)$.
\end{proof}

\subsection{Gaussian measures}\label{sec:gaussian}

    Before proving the main theorem, we review the Riemannian geometry of Gaussian measures (see~\cite{TAKATSU2012,Malag2018,Bhatia2019} for complete treatments of the subject). The differential structure over $\mathcal{P}_G(d)$ is given by:
    \begin{equation*}
        \mathcal{P}_G(d)\to \R^d\times \operatorname{Sym}^+(d), \qquad \mu_{m,\Sigma}\mapsto (m,\Sigma).
    \end{equation*}
    Given $\mu_1=\mu_{m_1,\Sigma_1}$ and $\mu_2=\mu_{m_2,\Sigma_2}$. The 2-Wasserstein distance between $\mu_1$ and $\mu_2$ is given by~\cite{Dowson1982}:
    \begin{equation}\label{eq:w2g1}
        W_2^2(\mu_1,\mu_2) = \|m_1-m_2\|_2^2+\text{Tr}(\Sigma_1+\Sigma_2-2(\Sigma_1^{1/2}\Sigma_2\Sigma_1^{1/2})^{1/2}).
    \end{equation}
    Notice that~\eqref{eq:w2g1} already contains the Euclidean distance between the means, thus we may assume hereafter that $m_1=m_2=0$. In this view, we denote $\mu_{\Sigma}=\mu_{0,\Sigma}$ and $\mathcal{P}_{G}^0(d)=\{\mu_{\Sigma}\mid \Sigma\in\operatorname{Sym}^+(d)\}$.

    For any $\mu=\mu_{\Sigma}\in \mathcal{P}_G^{0}(d)$ and $X,Y\in T_{\mu}\mathcal{P}_G^{0}(d)=\operatorname{Sym}(d)$, we define the Riemannian metric:
    \begin{equation*}
        g(X,Y) = \text{Tr}(X\Sigma Y).
    \end{equation*}
    It turns out that the distance function induced by this metric coincides with $W_2$ given in~\eqref{eq:w2g1}. We now write $\mathcal{P}_{G}^0(d)=\bigcup_{n\in\mathbb{N}}Y_n$ where
\begin{equation*}
    Y_n = \left\{ \ \mu_{\Sigma}\Bigm| \Sigma\in\operatorname{Sym}^+(d),\ \text{Tr}(\Sigma)\leq n \ \right\}.
\end{equation*}
Note that $Y_n$ is compact; this is because the sets of orthogonal matrices $O(d)$ and $\mathcal{D}=\{\text{diag}(\lambda_1,\ldots,\lambda_d)\mid  0\leq\lambda_i\leq n\}$ are both compact and the function $f_n:O(d)\times \mathcal{D}\to Y_n$ defined by $f_n(U,D)=UDU^{T}$ is continuous. Let $s=1/3$ and $X_n=\mathcal{P}_{G}^0(d)\setminus Y_{n}$. We will show that $W^2_2(x,y)>s$ for any $x\in X_{2n}$ and $y\in Y_{n}$ via the following lemma:
\begin{Lemma}
    For any $\Sigma_1\in\operatorname{Sym}^{++}(d)$ and $\Sigma_2\in \operatorname{Sym}^{+}(d)$, we have
    \begin{equation}\label{eq:trace}
        W_2(\mu_{\Sigma_1},\mu_{\Sigma_2}) \geq \left\lvert\text{Tr}(\Sigma_1)^{1/2}-\text{Tr}(\Sigma_2)^{1/2}\right\rvert.
    \end{equation}
\end{Lemma}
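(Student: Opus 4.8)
The plan is to start from the closed-form expression \eqref{eq:w2g1}. Since we have already reduced to the case $m_1=m_2=0$, the squared distance is $W_2^2(\mu_{\Sigma_1},\mu_{\Sigma_2}) = \text{Tr}(\Sigma_1)+\text{Tr}(\Sigma_2)-2\,\text{Tr}\big((\Sigma_1^{1/2}\Sigma_2\Sigma_1^{1/2})^{1/2}\big)$. Comparing with the square of the desired right-hand side, namely $\big(\text{Tr}(\Sigma_1)^{1/2}-\text{Tr}(\Sigma_2)^{1/2}\big)^2 = \text{Tr}(\Sigma_1)+\text{Tr}(\Sigma_2)-2\,\text{Tr}(\Sigma_1)^{1/2}\text{Tr}(\Sigma_2)^{1/2}$, I see that \eqref{eq:trace} is equivalent, after squaring, to the single cross-term inequality
\[
    \text{Tr}\big((\Sigma_1^{1/2}\Sigma_2\Sigma_1^{1/2})^{1/2}\big)\ \leq\ \text{Tr}(\Sigma_1)^{1/2}\,\text{Tr}(\Sigma_2)^{1/2}.
\]
Thus the whole problem collapses to bounding this cross term, which I expect to be the main obstacle.

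The key idea is to recognize the left-hand trace as a unitarily invariant norm, so that a Cauchy--Schwarz-type estimate becomes available. Setting $B=\Sigma_2^{1/2}\Sigma_1^{1/2}$, one has $B^{T}B=\Sigma_1^{1/2}\Sigma_2\Sigma_1^{1/2}$, so $(B^{T}B)^{1/2}$ has eigenvalues equal to the singular values of $B$. Consequently $\text{Tr}\big((\Sigma_1^{1/2}\Sigma_2\Sigma_1^{1/2})^{1/2}\big)=\|B\|_{*}$, the nuclear (trace) norm of $B$, i.e.\ the sum of its singular values.

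Next I would invoke the standard submultiplicative bound $\|AB\|_{*}\leq\|A\|_{F}\|B\|_{F}$, where $\|\cdot\|_{F}$ is the Frobenius norm. This follows from the dual characterisation $\|M\|_{*}=\max_{U\ \mathrm{orthogonal}}\text{Tr}(U^{T}M)$ together with the Frobenius Cauchy--Schwarz inequality and the orthogonal invariance of $\|\cdot\|_{F}$, since $\text{Tr}(U^{T}AB)=\langle A^{T}U,\,B\rangle_{F}\leq\|A^{T}U\|_{F}\|B\|_{F}=\|A\|_{F}\|B\|_{F}$. Applying this with $A=\Sigma_2^{1/2}$ and $B=\Sigma_1^{1/2}$, and using that each $\Sigma_i^{1/2}$ is symmetric positive semidefinite so that $\|\Sigma_i^{1/2}\|_{F}^{2}=\text{Tr}\big(\Sigma_i^{1/2}(\Sigma_i^{1/2})^{T}\big)=\text{Tr}(\Sigma_i)$, yields exactly the cross-term inequality above.

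Finally I would substitute back and complete the square: $W_2^2(\mu_{\Sigma_1},\mu_{\Sigma_2}) \geq \text{Tr}(\Sigma_1)+\text{Tr}(\Sigma_2)-2\,\text{Tr}(\Sigma_1)^{1/2}\text{Tr}(\Sigma_2)^{1/2} = \big(\text{Tr}(\Sigma_1)^{1/2}-\text{Tr}(\Sigma_2)^{1/2}\big)^2$, and take square roots to conclude \eqref{eq:trace}. I note that the hypothesis $\Sigma_1\in\operatorname{Sym}^{++}(d)$ is not actually needed for this argument: positive semidefiniteness of both matrices already makes every square root and norm above well defined, so the inequality holds on all of $\operatorname{Sym}^{+}(d)\times\operatorname{Sym}^{+}(d)$.
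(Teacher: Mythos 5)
Your proof is correct, and it takes a genuinely different route from the paper's, though both begin by reducing \eqref{eq:trace} to the same cross-term inequality $\text{Tr}\big((\Sigma_1^{1/2}\Sigma_2\Sigma_1^{1/2})^{1/2}\big)\leq \text{Tr}(\Sigma_1)^{1/2}\text{Tr}(\Sigma_2)^{1/2}$. The paper gets this bound by first using the invertibility of $\Sigma_1$ to rewrite the cross term as $\text{Tr}((\Sigma_2\Sigma_1)^{1/2})$ via the similarity $\text{Tr}(\Sigma_1^{-1/2}(\cdot)\Sigma_1^{1/2})$, then exploiting the homogeneity of this expression under scaling of $\Sigma_2$: the nonnegativity of $W_2^2(\mu_{\Sigma_1},\mu_{t^2\Sigma_2})$ yields a quadratic in $t$ that is nonnegative for all $t>0$, and optimizing over $t$ (equivalently, the discriminant condition) is exactly the cross-term bound. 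You instead identify the cross term as the nuclear norm of $\Sigma_2^{1/2}\Sigma_1^{1/2}$ and apply the Schatten-norm Cauchy--Schwarz estimate $\lVert AB\rVert_{*}\leq\lVert A\rVert_{F}\lVert B\rVert_{F}$, which you justify correctly via the dual characterisation of the nuclear norm. Your route buys something concrete: it never inverts $\Sigma_1$, so the inequality holds for all pairs in $\operatorname{Sym}^{+}(d)\times\operatorname{Sym}^{+}(d)$ directly, which would make the perturbation argument the paper uses immediately after the lemma (approximating a singular $\Sigma$ by $\Sigma'\in\operatorname{Sym}^{++}(d)$) unnecessary. The paper's route is more self-contained in that it uses only the closed form \eqref{eq:w2g1} and the nonnegativity of the distance, with no external matrix-norm inequality; as a minor aside, its scaling step contains a cosmetic inconsistency (substituting $t\Sigma_2$ but writing the $t^2$-parametrised quadratic), which your argument sidesteps entirely.
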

\begin{proof}
    Since $\Sigma_1$ is positive definite, we have that
    \begin{align*}
        \text{Tr}((\Sigma_1^{1/2}\Sigma_2\Sigma_1^{1/2})^{1/2}) &= \text{Tr}(\Sigma_1^{-1/2}(\Sigma_1^{1/2}\Sigma_2\Sigma_1^{1/2})^{1/2}\Sigma_1^{1/2}) \\
                                                                 &= \text{Tr}((\Sigma_2\Sigma_1)^{1/2}).
    \end{align*}
    Replacing $\Sigma_2$ by $t\Sigma_2$ for any $t>0$ yields
    \begin{equation*}
        W_{2}^2(\mu_{\Sigma_1},\mu_{t\Sigma_2})=\text{Tr}(\Sigma_1)+t^2\text{Tr}(\Sigma_2)-2t\text{Tr}((\Sigma_2\Sigma_1)^{1/2}) \geq 0.
    \end{equation*}
    Choosing $t=\text{Tr}((\Sigma_2\Sigma_1)^{1/2})/\text{Tr}(\Sigma_2)$ leads to $\text{Tr}((\Sigma_2\Sigma_1)^{1/2})\leq \text{Tr}(\Sigma_1)^{1/2}\text{Tr}(\Sigma_2)^{1/2}$. Therefore,
    \begin{equation*}
        W_{2}^2(\mu_{\Sigma_1},\mu_{\Sigma_2})\geq   \text{Tr}(\Sigma_1)+\text{Tr}(\Sigma_2)-2\text{Tr}(\Sigma_1)^{1/2}\text{Tr}(\Sigma_2)^{1/2}=\left(\text{Tr}(\Sigma_1)^{1/2}-\text{Tr}(\Sigma_2)^{1/2}\right)^{2}.
    \end{equation*}
\end{proof}

As a consequence, for any $x=\mu_{\Sigma}\in X_{2n}$ with $\Sigma\in\operatorname{Sym}^{++}(d)$ and $y\in Y_{n}$, we have
\[
    W_{2}(x,y)\geq \left\lvert\sqrt{2n}-\sqrt{n}\right\rvert > 1/3.
\]
Thus, if $x$ satisfies $B(x,s)\cap Y_n\not= \emptyset$, then we must have $x\in Y_{2n}$. This result can be extended to $x=\mu_{\Sigma}$ where $\Sigma\in\operatorname{Sym}^{+}(d)$: in view of~\ref{eq:trace}, we can make small perturbations on the eigenvalues of $\Sigma$ to obtain $\Sigma'\in\operatorname{Sym}^{++}(d)$ so that $W_2(\mu_{\Sigma},\mu_{\Sigma'})$ is arbitrarily small.

\bigskip
\begin{proof}[Proof of Theorem~\ref{thm:main3}]
Under the above observation, we are now ready to set up for the conditions in Lemma~\ref{lemma:helper}. Let $\ovl{g}$ be the standard Euclidean metric. For any $\Sigma\in \operatorname{Sym}^{+}(d)\subset\R^{d(d+1)/2}$, we denote by $\ovl{B}_{\ovl{g}}(\Sigma,r)$ the closed ball in the Euclidean space and $\ovl{B}_{g}(\mu_{\Sigma},r)$ the closed ball under the Riemannian distance $W_2$. Define a smooth map $\varphi:\mathcal{P}^0_G(d)\to\operatorname{Sym}^{+}(d)$ by $\varphi(\mu_{\Sigma})=\Sigma$. Our goal is to show the following: there exist $c,C>0$ such that, for any $x\in Y_{2n}$ and $y\in Y_{n}$,
\begin{equation}\label{eq:eqRE}
    c\lVert\varphi(x)-\varphi(y)\rVert_2\leq W_2(x,y)\leq C\lVert\varphi(x)-\varphi(y)\rVert_2.
\end{equation}
Since $\mathcal{W}_{2}$ is a \WPC space (Theorem~\ref{thm:w2wpc}), the universal consistency follows from Lemma~\ref{lemma:helper}.

Assume for a contradiction that the first inequality in~\eqref{eq:eqRE} is not true. Then we can find two sequences $(p_k)_{k\in\mathbb{N}}$ in $Y_{2n}$ and $(q_k)_{k\in\mathbb{N}}$ in $Y_{n}$ such that
\begin{equation}\label{eq:opp}
    \lVert\varphi(p_k)-\varphi(q_k)\rVert_2> kW_2(p_k,q_k),
\end{equation}
for all $k\in\mathbb{N}$. Thus, $\limsup_{k\to\infty} W_2(p_k,q_k)=0$. By passing to a subsequence, we may assume that $(p_k)_{k\in\mathbb{N}}$ and $(q_k)_{k\in\mathbb{N}}$ converges to the same point $p\in Y_{n}$. Since $\mathcal{P}_G^0(d)$ is locally compact, there is $r>0$ such that $\ovl{B}_g(p,r)$ is compact.

Let $r'=r/4$ and $\varepsilon\in (0,r')$. For any $x,y\in \ovl{B}_{g}(p,r')$, there exists a piecewise smooth curve $\gamma_{\varepsilon}:[0,1]\to\operatorname{Sym}^{+}(d)$ joining $x$ and $y$ such that $L_{g}(\gamma_{\varepsilon})<W_2(x,y)+\varepsilon$. Notice that $\gamma_{\varepsilon}$ lies entirely in $\ovl{B}_{g}(p,r)$: for any $t\in[0,1]$,
\[
    W_2(p,\gamma(t))\leq W_2(p,x)+W_2(x,\gamma(t))\leq W_2(p,x)+W_2(x,y)\leq 3r'<r.
\] 
For any $q\in \ovl{B}_{g}(p,r)$ and any $X\in T_{q}\mathcal{P}^0_{G}(d)=\operatorname{Sym}(d)$, we denote $\lVert X \rVert_g = \sqrt{g(X,X)}$. Since $\ovl{B}_{g}(p,r)$ is a compact set, there exists a constant $c,C>0$ such that $c\lVert X\rVert_{\ovl{g}}\leq\lVert X\rVert_{g}\leq C\lVert X\rVert_{\ovl{g}}$. Consequently,
\begin{equation*}
    L_{\ovl{g}}(\varphi(\gamma_{\varepsilon})) = \int_{0}^1\lVert \gamma'_{\varepsilon}(t)\rVert_{\ovl{g}} \ dt\leq c^{-1}\int_{0}^1\lVert\gamma'_{\varepsilon}(t)\rVert_{g} \ dt = c^{-1}L_{g}(\gamma_{\varepsilon}).
\end{equation*}
Taking the infimum over all such curves, we have $\lVert \varphi(x)-\varphi(y) \rVert_2<c^{-1}(W_2(x,y)+\varepsilon)$ for all $x,y\in\ovl{B}(p,r')$ and arbitrary $\varepsilon>0$. Thus, for a sufficiently large $k$, we have
\[
\lVert \varphi(p_k)-\varphi(q_k) \rVert_2<c^{-1}W_2(p_k,q_k),
\] 
which contradicts~\eqref{eq:eqRE}. Thus the first inequality in~\eqref{eq:eqRE} holds. The second inequality follows similarly by repeating the proof but switching $g$ and $\ovl{g}$.
\end{proof}

\subsection{Densities of finite wavelet series} \label{sec:wavelet}

Under the assumptions on wavelets given in Section~\ref{sec:mainthms}, we have the following inequalities from~\cite{Weed19a}.
\begin{Lemma}
    For measures $\mu_f$ and $\mu_g$ in $A_n$ where $f$ and $g$ are given in~\eqref{eq:wl},     
    \begin{align}
        W_1(\mu_f,\mu_g) &\leq C_1\biggl(\sum_{k=-K_0}^{K_0}|\alpha_{\ell k}-\alpha'_{\ell k}|+\sum_{j=\ell}^n\sum_{k=-K_j}^{K_j}2^{-\frac{3}{2}j}|\beta_{jk}-\beta'_{jk}|\biggr) \label{eq:upperwl} \\
        W_1(\mu_f,\mu_g) &\geq C_2\biggl(\sum_{k=-K_0}^{K_0}|\alpha_{\ell k}-\alpha'_{\ell k}|+\max_{\ell\leq j\leq n}\sum_{k=-K_j}^{K_j}2^{-\frac{3}{2}j}|\beta_{jk}-\beta'_{jk}|\biggr), \label{eq:lowerwl} 
    \end{align}
    for some positive constants $C_1$ and $C_2$.
\end{Lemma}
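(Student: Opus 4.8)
The plan is to prove the two estimates separately, using two different descriptions of $W_1$ on the line. For the upper bound I would use the primitive (antiderivative) representation $W_1(\mu_f,\mu_g)=\int_{\R}|F_f(x)-F_g(x)|\dd x$, where $F_f(x)=\int_{-\infty}^x f$ is the cumulative distribution function; for the lower bound I would use the Kantorovich--Rubinstein duality $W_1(\mu_f,\mu_g)=\sup_{\operatorname{Lip}(h)\le 1}\int h\,(f-g)\dd x$ already recorded in Section~\ref{sec:wpc}. The single object driving both halves is the antiderivative of a wavelet: writing $\Psi(v)=\int_{-\infty}^v\psi$, the vanishing integral of $\psi$ together with its compact support forces $\Psi$ to be compactly supported, and the dilation identity $\int_{-\infty}^x\psi_{jk}=2^{-j/2}\Psi(2^jx-k)$ produces the crucial scale factor.

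For~\eqref{eq:upperwl}, I would expand the CDF difference as
\[
F_f-F_g=\sum_{|k|\le K_0}(\alpha_{\ell k}-\alpha'_{\ell k})\Phi_{\ell k}+\sum_{j=\ell}^n\sum_{|k|\le K_j}(\beta_{jk}-\beta'_{jk})\Psi_{jk},
\]
where $\Phi_{\ell k},\Psi_{jk}$ denote the primitives of $\phi_{\ell k},\psi_{jk}$, and then apply the triangle inequality in $L_1[0,1]$. The only computation needed is $\|\Psi_{jk}\|_{L_1[0,1]}=2^{-j/2}\cdot 2^{-j}\|\Psi\|_{L_1}=2^{-3j/2}\|\Psi\|_{L_1}$, which follows from the dilation identity (the first factor is the amplitude, the second the support width); the finitely many coarse primitives $\Phi_{\ell k}$ each contribute a fixed constant at scale $\ell$. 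Collecting all these constants into a single $C_1$ gives the stated upper bound.

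For~\eqref{eq:lowerwl}, I would build an explicit competitor in the duality for each scale. Fix $j$ and set $h_j=\sum_{|k|\le K_j}\varepsilon_k\,2^{-3j/2}\psi_{jk}$ with $\varepsilon_k=\operatorname{sign}(\beta_{jk}-\beta'_{jk})$. Differentiating, the weight $2^{-3j/2}$ exactly cancels the $2^{3j/2}$ growth of $\psi_{jk}'$, so $h_j'(x)=\sum_k\varepsilon_k\psi'(2^jx-k)$; since $\psi'$ is bounded with compact support, only a fixed number of translates overlap at any $x$, whence $\operatorname{Lip}(h_j)\le L$ for a constant $L$ independent of $j$. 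Because $\{\phi_{\ell k}\}\cup\{\psi_{jk}\}$ is orthonormal, $\int h_j(f-g)=\sum_k 2^{-3j/2}|\beta_{jk}-\beta'_{jk}|$, so testing $h_j/L$ yields $W_1\ge L^{-1}\sum_k 2^{-3j/2}|\beta_{jk}-\beta'_{jk}|$ for every $j$, hence for the maximum over $j$. An analogous coarse-scale competitor $\sum_k\varepsilon_k\phi_{\ell k}$ gives $W_1\gtrsim\sum_k|\alpha_{\ell k}-\alpha'_{\ell k}|$, and adding the two lower bounds (at the cost of a factor of two) produces~\eqref{eq:lowerwl}.

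The main obstacle is the lower bound, and within it the uniform control of $\operatorname{Lip}(h_j)$ across scales: the estimate hinges on the precise balance between the $2^{-3j/2}$ weight and the $2^{3j/2}$ blow-up of the wavelet derivative, which is exactly what the continuous differentiability and compact-support hypotheses on $\psi$ are there to guarantee. A secondary point to verify is that this construction naturally yields the maximum-over-scales form rather than a sum over scales, since each single-scale test function isolates one $j$ at a time through orthogonality; combining the coarse term with the per-scale bounds is then routine.
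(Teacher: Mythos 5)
Your argument is sound, and it is essentially the standard proof of these estimates; note that the paper itself gives no proof here, attributing the lemma to the cited reference [Weed--Berthet 2019], whose argument proceeds exactly as you describe (CDF/$L_1$-of-primitives representation for the upper bound, Kantorovich--Rubinstein duality with single-scale test functions for the lower bound, with the $2^{-3j/2}$ weight arising from the balance between the $2^{-j/2}$ amplitude and $2^{-j}$ support width of the primitive $\Psi_{jk}$, respectively between the weight and the $2^{3j/2}$ growth of $\psi_{jk}'$). Two points you use implicitly and should state: the compact support of $\Psi$ requires $\int\psi=0$, which is standard for wavelets but not among the paper's listed assumptions; and the identities $\int\psi_{jk}(f-g)=\beta_{jk}-\beta_{jk}'$ require the system to be orthonormal on $[0,1]$ rather than on $\R$, which is why the paper points to boundary-corrected (Cohen--Daubechies--Vial) constructions. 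Neither affects the correctness of your outline.
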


\begin{proof}[Proof of Theorem~\ref{thm:main2}]
    For a fixed $\mu_{f_0}\in \mathcal{V}_N$, define 
    \[\mathcal{G}_{f_0}(\mathcal{V}_N) = \{\mu_{(1-t)f_0+tg} \ | \ \mu_g\in \mathcal{V}_N, \quad t\in[0,1] \}.\]
    Thus, $\mathcal{G}_{f_0}(\mathcal{V}_N)$ contains constant speed geodesics under $W_1$ from $\mu_{f_0}$ to each measure in $\mathcal{V}_N$.
    
    Let $f\in \mathcal{G}_{f_0}(\mathcal{V}_N)$ with coefficients $\alpha_{\ell}=(\alpha_{-K_0},\ldots,\alpha_{K_0})$ and $\beta_j=(\beta_{j(-K_j)},\ldots,\beta_{jK_j})$, we define a function $\varphi:\mathcal{V}_N \to \R^{D_N}$ for a suitable $D_N$ as follows:  
    \[
        \varphi\left(\mu_{f}\right) = (\alpha_{\ell},\beta_{\ell},\ldots,\beta_{N}).
    \] 
    Given any $\mu_{g_1},\mu_{g_2}\in \mathcal{V}_N$, it follows from~\eqref{eq:upperwl} and~\eqref{eq:lowerwl} that
    \begin{align*}
        W_1(\mu_{g_1},\mu_{g_2}) &\leq C_1\biggl(2K_0+2\sum_{j=l}^N K_j\biggr)^{\frac{1}{2}}\|\varphi(\mu_{g_1})-\varphi(\mu_{g_{2}})\|_2
        \intertext{and}
        W_1(\mu_{f_0},\mu_{g_1}) &\geq C_2(N-\ell+1)^{-1}2^{-\frac{3}{2}\max_j K_j}\|\varphi(\mu_{f_0})-\varphi(\mu_{g_1}) \|_2.
    \end{align*}
    Thus, the $k_n$-NN classifier is universally consistent on $(\mathcal{V}_N,W_1)$ as a result of Lemma~\ref{lemma:helper}.
\end{proof}

\section{Conclusion and open problems}
We established that the $k_n$-NN classifier is not universally consistent on $\mathcal{W}_p((0,1))$ for any $p\geq 1$. Thus one cannot hope to obtain universal consistency without some restriction on the base metric space, or the Wasserstein space itself. We then give some examples of subsets of Wasserstein spaces, on which the $k_n$-NN is universally consistent. The first example is $W_p(X)$ for any $p\geq 1$, where $X$ is a $\sigma$-uniformly discrete set. The remaining examples exploit the geodesic structure of the Wasserstein spaces for $p=1$ and $p=2$. Specifically, we show that $k_n$-NN classifier is uniformly consistent on the space of measures supported on a finite set, the space of Gaussian measures, and the space of measures with densities expressed as finite wavelet series.

The following are related problems that might be worth exploring:
\begin{itemize}
    \item We have showed in Section~{\ref{sec:th1}} that, when $X$ is a finite set, the $k_n$-NN classifier is universally consistent on $\mathcal{W}_1(X)$. It is then natural to ask: does the universal consistency hold on $\mathcal{W}_p(X)$ for $p>1$?
    \item Does the universal consistency holds on other parametrized family of distributions, for example, the exponential family?
    \item We might instead consider the entropic regularized Wasserstein distance which can be computed much faster than the original Wasserstein distance~\cite{Cuturi2013}: 
        \begin{equation*}
            W_{p,\epsilon}(\mu,\nu) = \inf_{\pi\in\Pi^{\epsilon}(\mu,\nu)}\Big( \int_{X\times X} d(x,y)^p d\pi(x,y)\Big)^{1/p},
\end{equation*}
where $\Pi^{\epsilon}(\mu,\nu)$ is the set of probability measures on $X\times X$ with marginals $\mu$ and $\nu$ satisfying $D_{\text{KL}}(\pi^{\epsilon} \| \mu \otimes \nu)\leq \epsilon$. Can we obtain the same results presented in this paper if we replace $W_p$ by $W_{p,\epsilon}$ ?  
\end{itemize}

\section*{Acknowledgment}

The author would like to thank the reviewers for their comments which helped improve this work significantly. The author also would like to thank Chiang Mai University, Thailand, for financial support.


\bibliographystyle{imaiai}
%

\ifx\undefined\BySame
\newcommand{\BySame}{\leavevmode\rule[.5ex]{3em}{.5pt}\ }
\fi
\ifx\undefined\textsc
\newcommand{\textsc}[1]{{\sc #1}}
\newcommand{\emph}[1]{{\em #1\/}}
\let\tmpsmall\small
\renewcommand{\small}{\tmpsmall\sc}
\fi


\appendix

\section{Proof of Theorem~{\ref{thm:equiv}}}\label{sec:proofthm}
We start with a couple of definitions regarding measures on a metric space:
\begin{Def}
    For any metric space $(X,d)$, we denote by $\mathcal{M}(X)$ the set of all finite signed measures on $X$ and $\mathcal{M}^{+}(X)$ the set of all finite positive measures on $X$. Thus $\mathcal{M}^{+}(X)\subset\mathcal{M}(X)$ 

    For any $\rho\in\mathcal{M}^{+}(X)$ and $\eta\in\mathcal{M}(X)$, we define the quotient and the maximal function:
    \begin{align*}
        T_{r}(\eta,\rho)(y) &= \eta(\ovl{B}(y,r))/\rho(\ovl{B}(y,r)) \\
        S_{r}(\eta,\rho)(y) &= \sup_{0<a<r}T_{a}(\eta,\rho)(y).
    \end{align*}
    For any $f\in L^1(\rho)$, we denote by $f\rho$ the measure $A\mapsto \int_{A}f \ d\rho$.

    For any $\rho\in\mathcal{M}^{+}(X)$ and any set $A\subset X$ (not necessarily $\rho$-measurable), we define the upper measure $\ovl{\rho}(A)$ by 
    \[
        \ovl{\rho}(A) = \inf\left\{\sum_{i=1}^n \rho(E_i)\biggm|  E_i \text{ is measurable for all }i \text{ and }  A\subset \bigcup_{i=1}^n E_i\right\}.
    \] 
\end{Def}
The original proof of Assouad and Quentin de Gromard~\cite[Section 4]{Assouad2006} only assumes that $X$ is a set with a symmetric kernel $d$ (that is, $d$ does not have to satisfy the triangle inequality). For our applications, we make a stronger assumption that $(X,d)$ is a separable metric space and $\rho$ is a finite Borel measure, which allows us to simplify the proof of ($n_4$) $\Rightarrow$ ($n_5$) below.

First, we will prove that finite metric dimension implies the differentiation condition. The proof consists of the following statements for a metric space $(X,d)$:
\begin{itemize} 
    \item[($n_1$)] (Nagata dimension) For a given $Y\subset X$, there exists $s>0$ such that, for any $a\in X$ and $y_1,\ldots, y_{m+1}\in Y\cap B(a,s)$, there exists $i,j$ such that
        \begin{equation}
            d(y_i,y_j)\leq \max\{d(a,y_i),d(a,y_j)\}.
        \end{equation}
    \item[($n_2$)] (metric dimension) For a given $Y\subset X$, there exists $s>0$ such that, if $\mathcal{B}=\{\ovl{B}(y_i,r_i)\}_{i\in\mathbb{I}}$ is a family of closed balls with $r_i<s$ for all $i\in I$ and $y_i\in Y\setminus \ovl{B}(y_j,r_j)$ for all distinct $i,j\in I$, then $\mathcal{B}$ has multiplicity $\leq m$.
    \item[($n_3$)] (weak covering property) For a given $Y\subset X$, there exists $s>0$ such that, if $\mathcal{B}=\{\ovl{B}(y_i,r_i)\}_{i\in I}$ is a family of closed balls, where $\{y_i\}_{i\in I}\subset Y$ and $\{r_{i}\}_{i\in I}$ is contained in a decreasing sequence $(a_k)_{k\in\mathbb{N}}$ bounded above by $s$, then $\mathcal{B}$ has a subfamily of multiplicity $\leq m$ that covers $\{y_i\}_{i\in I}$.
    \item[($n_4$)] (maximal inequality) For a given $Y\subset X$, there exists $s>0$ such that, for any $\rho\in\mathcal{M}^{+}(X)$, any $\eta\in\mathcal{M}(X)$, any $r\in (0,s)$, and any $\alpha>0$, we have $\alpha\ovl{\rho}(Y\cap\{S_r(\eta,\rho)>\alpha\})\leq m\lvert\eta\rvert(X)$.
    \item[($n_5)$] (differentiation condition) Assume further that $(X,d)$ is separable. For any Borel $\rho\in\mathcal{M}^{+}(X)$ and any $f\in L^1(\rho)$, the quotient $T_r(f\rho,\rho)$ converges to $f$ $\rho$-almost surely on $Y$ as $r\to 0$.
\end{itemize}

Even though not necessary, ($n_1$) is provided here for completeness. We will prove that ($n_1$) $\Rightarrow$ ($n_2$) $\Rightarrow$ ($n_3$) $\Rightarrow$ ($n_1$) and ($n_3$) $\Rightarrow$ ($n_4$) $\Rightarrow$ ($n_5$). This proves Theorem~\ref{thm:equiv} for metric spaces with finite metric dimension, as any bounded $\rho$-measureable function, given that $\rho$ is finite, is in $L^1(\rho)$.

\bigskip
\begin{proof}[Proof of \emph{($n_1$) $\Rightarrow$ ($n_2$)}]
    Let $s$ be as in ($n_1$). Let $\{\ovl{B}(y_i , r_i )\}_{i=1}^k$ be a
    subfamily of $k$ closed balls centered in $Y$ containing a point $a\in X$. For any $i\not= j$, we have 
\[
    d(y_i,y_j) >\max\{r_i,r_j\}\geq \max\{d(a,y_i),d(a,y_j)\},
\] 
so ($n_1$) implies $k\leq m$.
\end{proof}

\bigskip
\begin{proof}[Proof of \emph{($n_2$) $\Rightarrow$ ($n_3$)}]
    Let $s$, $\mathcal{B}=\{\ovl{B}(y_i,r_i)\}_{i\in I}$ and $(a_k)_{k\in\mathbb{N}}$ be as in ($n_2$). Let $J_1$ be a maximal subset of $\{i\in I\mid r_i=a_1\}$ such that $d(y_i,y_j)>a_1$ for all distinct $i,j$ in $J_1$ (such $J_1$ exists because of the Hausdorff maximum principle). Suppose that $J_1,\ldots,J_{k-1}$ have been defined; we denote by $X_{k-1}$ the union of balls $\ovl{B}(y_j,r_j)$ over all $j$ in $\bigcup_{l=1}^{k-1}J_l$. We define $J_k$ to be a maximal set of $\{i\in I\mid r_i=a_k, y_i\notin X_{k-1}\}$ such that $d(y_i,y_j)>a_k$ for all distinct $i,j$ in $J_k$.

    Let $J$ be the union of all $J_k$'s. We observe that, for any ball $\ovl{B}(y_i,r_i)$ with $r_i=a_k$, if $y_i\notin X_{k-1}$ and $i\notin J_k$, then $y_i$ must be contained in $\bigcup_{j\in J_k}\ovl{B}(y_j,r_j)$ (otherwise we can add $i$ to $J_k$ which is maximal, a contradiction). Therefore, $\mathcal{B}_J=\{\ovl{B}(y_j,r_j)\}_{j\in J}$ is a subfamily of $\mathcal{B}$ containing $y_i$ for all $i\in I$. Moreover, by the construction, $y_{j}\notin \ovl{B}(y_{l},r_{l})$ for all distinct $j,l\in J$ and $r_j<s$ for all $j\in J$. Thus $B_J$ satisfies the conditions in ($n_2$). As a result, the multiplicity of $B_J$ is $\leq m$.
\end{proof}

\bigskip
\begin{proof}[Proof of \emph{($n_3$) $\Rightarrow$ ($n_1$)}]
    Let $s$ be as in ($n_3$). Let $a \in X$ and $y_1 , \ldots , y_k \in Y\cap B(a,s)$ with $d(y_i , y_j ) > \max\{d(a, y_i ) , d(a, y_j )\}$ for all distinct $i, j$. The balls $\ovl{B}_i=\ovl{B}(y_i , d(a, y_i ))$ satisfy $y_i \notin \ovl{B}_j$ for all distinct $i, j$. Thus, no proper subfamily of $\mathcal{B}=\{\ovl{B}_i\}_{i=1}^k$ contains all $y_1,\ldots, y_k$, which, combined with ($n_3$), implies that $\mathcal{B}$ itself must have multiplicity $\leq m$. Since $a\in\bigcap_{i=1}^k\ovl{B}_i$ is non-empty, we conclude that $k\leq m$.
\end{proof}

\bigskip
\begin{proof}[Proof of \emph{($n_3$) $\Rightarrow$ ($n_4$)}] 
    Let $s$ be as in ($n_3$). Let $r\in(0,s)$ and define
    \[Y^r_{\alpha}=\{y\in Y\mid S_r(\eta,\rho)(y)>\alpha\}.\]
    For any $y\in Y^r_{\alpha}$, there exists $r_y<s$ such that $\alpha\rho(\ovl{B}(y,r_y))<\eta(\ovl{B}(y,r_y))$. Using the continuity of measures, we assume that $r_y$ is rational. Write $\mathbb{Q}=\bigcup_{i\in\mathbb{N}}Q_i$, where $(Q_i)_{i\in\mathbb{N}}$ is an increasing sequence of finite sets and define 
    \[Y_{\alpha,j}=\{y\in Y^r_{\alpha}\mid r_y\in Q_j\}.\]
    Then $\mathcal{A}_j=\{\ovl{B}(y,r_y)\}_{y\in Y_{\alpha,j}}$ is a cover of $Y_{\alpha,j}$ whose radii are contained in a finite set $Q_j$ and are smaller than $s$. Thus it follows from ($n_3$) that $\mathcal{A}_j$ has a subcover $\mathcal{B}_j=\{\ovl{B}(y_i,r_{y_{i}})\}_{i\in I}$ with multiplicity $\leq m$. 

    We claim that $\mathcal{B}_j$ is countable: denoting $\mathcal{B}^n_j=\{A\in\mathcal{B}_j\mid \eta(A)>1/n\}$, we have
\begin{equation*}
    \frac{1}{n}\lvert\mathcal{B}^n_j\rvert < \sum_{A\in \mathcal{B}^n_j} \eta(A) \leq \sum_{A\in \mathcal{B}^n_j } \lvert\eta\rvert(A) \leq m\lvert\eta\rvert(X),
\end{equation*}
which implies $\lvert \mathcal{B}^n_j\rvert< nm\lvert\eta\rvert(X)$ for all $n\in\mathbb{N}$. Thus, as $\eta(\ovl{B}(y,r_{y}))>0$ for all $y\in Y^r_{\alpha}$, we can write $\mathcal{B}_j=\bigcup_{i\in\mathbb{N}}\mathcal{B}^n_{j}$ which is countable as claimed. Therefore, we have the following inequalities:
    \begin{align*}
        \alpha\ovl{\rho}(Y_{\alpha,j})&\leq \alpha\sum_{i\in I}\rho(\ovl{B}(y_i,r_{y_{i}}))<\sum_{i\in I}\eta(\ovl{B}(y_i,r_{y_{i}})) \\
                                     &\leq m\lvert\eta\rvert(\cup_{i\in I}(\ovl{B}(y_i,r_{y_{i}})) \leq m\lvert\eta\rvert(X).
    \end{align*}
    Taking the limit $j\to\infty$ gives $\alpha\ovl{\rho}(Y^r_{\alpha})\leq m\lvert\eta\rvert(X)$.
\end{proof}

\bigskip
\begin{proof}[Proof of \emph{($n_4$) $\Rightarrow$ ($n_5$)}] This is where our proof deviates from{~\cite{Assouad2006}}. Specifically, the original proof relies on a stronger version of Lemma{~\ref{lemma:contdense}} below, where $d$ is only assumed to be a symmetric kernel. In contrast, assuming that $(X,d)$ is a metric space allows us to obtain a constructive proof of Lemma{~\ref{lemma:contdense}}.
    \vspace*{-7mm}
    \begin{itemize} \item[]\begin{Lemma}\label{lemma:contdense} Let $(X,d)$ be a separable metric space and $\rho$ is a finite Borel measure on $X$. Then the set of bounded continuous functions is dense in $L^1(\rho)$.
\end{Lemma}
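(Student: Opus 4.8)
The plan is to prove density of bounded continuous functions in $L^1(\rho)$ by the standard three-step approximation: reduce to simple functions, then to indicators of measurable sets, and finally approximate each indicator $\mathbf{1}_E$ by a bounded continuous function. Since $(X,d)$ is a metric space, the last step can be carried out explicitly using the distance function, which is why the author flags this as a \emph{constructive} proof that avoids the heavier machinery needed when $d$ is merely a symmetric kernel.

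First I would recall that simple functions are dense in $L^1(\rho)$: for any $f\in L^1(\rho)$, truncating and taking the usual increasing sequence of simple functions $f_n\to f$ pointwise with $|f_n|\le|f|$, dominated convergence gives $\|f_n-f\|_{L^1(\rho)}\to 0$. By linearity, it therefore suffices to approximate a single indicator $\mathbf{1}_E$ for $E$ a Borel set of finite measure (which holds automatically since $\rho$ is finite). Next, using the regularity of finite Borel measures on a metric space, I would find a closed set $F$ and an open set $U$ with $F\subseteq E\subseteq U$ and $\rho(U\setminus F)<\varepsilon$; such inner-closed/outer-open regularity holds for every finite Borel measure on a metric space.

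The key constructive step is then to separate $F$ from the complement of $U$ by a continuous Urysohn-type function built from the metric. Concretely, I would set
\[
    g(x) = \frac{d(x, X\setminus U)}{d(x,F) + d(x, X\setminus U)},
\]
which is well defined and continuous because $F$ and $X\setminus U$ are disjoint closed sets (so the denominator never vanishes on the relevant region, and one handles the degenerate cases $F=\emptyset$ or $U=X$ separately), satisfies $0\le g\le 1$, equals $1$ on $F$ and $0$ on $X\setminus U$. Then $g$ is bounded and continuous, and $\{g\neq \mathbf{1}_E\}\subseteq U\setminus F$, so
\[
    \|g-\mathbf{1}_E\|_{L^1(\rho)} = \int_{U\setminus F} |g-\mathbf{1}_E|\dd\rho \le \rho(U\setminus F) < \varepsilon.
\]
Combining the three steps via the triangle inequality yields a bounded continuous function within $2\varepsilon$ (or similar) of any given $f$ in $L^1(\rho)$, completing the proof.

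I expect the main obstacle to be purely bookkeeping rather than conceptual: ensuring the regularity statement is invoked correctly (it needs $\rho$ finite and defined on the Borel $\sigma$-algebra of a metric space, both of which are in our hypotheses) and handling the boundary cases where $F$ or $X\setminus U$ is empty so that the Urysohn function is still well defined. Separability of $X$ is not strictly needed for this particular lemma but is consistent with the ambient hypotheses; the essential ingredient is the metric structure, which supplies the explicit continuous separating function $g$ above.
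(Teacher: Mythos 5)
Your proposal is correct and follows essentially the same route as the paper: reduce to simple functions, invoke regularity of the finite Borel measure, and build an explicit continuous approximant from the distance function (your Urysohn quotient $d(x,X\setminus U)/(d(x,F)+d(x,X\setminus U))$ plays the same role as the paper's $\min\{1,\,n\cdot d(x,C)\}$ applied to closed sets). If anything, your version is slightly more careful in handling arbitrary Borel sets via the closed/open sandwich, but the underlying idea is identical.
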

\begin{proof}
    Since $\rho$ is a finite Borel measure on a metric space, it is regular. Since $(X,d)$ is separable, the Borel $\sigma$-algebra is the same as the $\sigma$-algebra generated by closed balls in $X$. As the set of simple functions is dense in $L^1(\rho)$, it suffices to show that the function $\mathbf{1}_{C}$ for any closed set $C\subset X$ is an $L^1$-limit of a sequence of bounded continuous functions. We thus define 
    \[f_{C,n}(x)=\min\{1,n\cdot d(x,C) \},\] 
    which is continuous and bounded. By the dominated convergence theorem, $f_{C,n}\to \mathbf{1}_{C}$ in $L^1$ as $n\to\infty$.
\end{proof}
\end{itemize}

    For any $h\in L^1(\rho)$, we define $U_{\rho}h=\limsup_{r\to 0}\lvert T_r(h\rho,\rho)-h\rvert$. Let $f\in L^1(\rho)$. By Lemma~\ref{lemma:contdense}, for a given $\epsilon>0$, there exists a bounded continuous function $g$ such that $\lVert f-g\rVert_{L^1}<\epsilon$. By the continuity, we have $U_{\rho}g=0$. Let $h=f-g$. Then, with $s$ as in ($n_4$),
    \begin{equation*}
        U_{\rho}f \leq U_{\rho}g+U_{\rho}h\leq S_{s/2}(\lvert h \rvert \rho,\rho)+\lvert h \rvert,
    \end{equation*}
     Consequently, for any $\alpha>0$, we have $\{U_{\rho}>\alpha\}\subset\{S_{s/2}(\lvert h \rvert \rho,\rho)>\alpha/2\}\cup\{\lvert h \rvert>\alpha/2\}$. Combining this with ($n_4$) and the Chebychev's inequality yields:
\begin{align*}
    \alpha\ovl{\rho}(Y\cap\{U_{\rho}f>\alpha\}) &\leq \alpha\ovl{\rho}(Y\cap\{S_{s/2}(\lvert h \rvert \rho,\rho)>\alpha/2\})+\alpha\ovl{\rho}(Y\cap\{\lvert h \rvert>\alpha/2\})\\
                                              &\leq 2m\lVert h \rVert_{L^1}+2\lVert h \rVert_{L^1} \leq 2(m+1)\epsilon.
\end{align*}
Taking $\epsilon\to 0$ and then $\alpha\to 0$, we conclude that the set $\{y\in Y \mid U_{\rho}f(y)>0\}$ is a $\rho$-null set.
\end{proof}

\bigskip
We now extend the result to a metric space $(X,d)$ that has $\sigma$-finite metric dimension. Suppose that $X=\bigcup_{i\in\mathbb{N}}Y_i$ where each $Y_i$ satisfies either one of ($n_1$), ($n_2$) or ($n_3$). Since any of these statements implies ($n_5$), for any Borel $\rho\in\mathcal{M}^{+}(X)$ and any $f\in L^1(\rho)$, there exists a collection of $\rho$-null sets $\{E_i\}_{i\in\mathbb{N}}$ such that the quotient $T_r(f\rho,\rho)$ converges to $f$ on $Y_i\setminus E_i$ for all $i\in\mathbb{N}$. In other words, the convergence holds outside of the $\rho$-null set $\bigcup_{i\in\mathbb{N}}E_i$. \qed

\end{document}